\def\hindu{\arabic}
\newtheorem{theorem}{Theorem}[section]
\newtheorem{lemma}{Lemma}[section]
\newtheorem{rem}{Remark}[section]
\newtheorem{definition}{Definition}[section]
\newtheorem{proposition}{Proposition}[section]
\renewcommand{\theequation}{\hindu{section}.\hindu{equation}}
\def\bhag#1{\noindent
\setcounter{equation}{0}
\pagestyle{plain}
\section{#1}
}
\def\NN{{\mathbb N}}
\def\RR{{\mathbb R}}
\def\CC{{\mathbb C}}
\def\ZZ{{\mathbb Z}}
\def\PPI{{{\rm I}\kern-1pt\Pi}}
\def\b #1;{{\bf #1}}
\def\x{{\bf x}}
\def\k{{\bf k}}
\def\y{{\bf y}}
\def\r{{\bf r}}
\def\v{{\bf v}}
\def\w{{\bf w}}
\def\z{{\bf z}}
\def\m{\mathfrak{m}}
\def\j{\mathbf{j}}
\def\C{{\mathcal C}}
\def\esssup{\mathop{\hbox{{\rm ess sup}}}}
\def\be{\begin{equation}}
\def\ee{\end{equation}}
\def\bea{\begin{eqnarray}}
\def\eea{\end{eqnarray}}
\def\disp{\displaystyle}
\def\donchitre#1#2{\vskip 6.5cm\noindent
\parbox[t]{1in}{\special{eps:#1.eps x=6.5cm y=5.5cm}}
\hbox to 7cm{}\parbox[t]{0.0cm}{\special{eps:#2.eps x=6.5cm y=5.5cm}}}
\def\BB{{\mathbb B}}
\def\bs#1{{\boldsymbol{#1}}}
\begin{document}
\title{Encoding of data sets and algorithms}

\author{Katarina Doctor$^1$}
\thanks{
$^1$Navy Center for Applied Research in AI, Information Techchology Division, U.S. Naval Research Laboratory, Washington DC 20375.
\email{katarina.doctor@nrl.navy.mil}}

\author{Tong Mao$^2$}
\thanks{
$^2$Institute of Mathematical Sciences, Claremont Graduate University (United States). The research of this author was funded by NSF DMS grant 2012355.
\email{tong.mao@cgu.edu}}

\author{Hrushikesh Mhaskar$^3$}
\thanks{
$^3$Institute of Mathematical Sciences, Claremont Graduate University (United States). The research of HNM was supported in part by ARO grant W911NF2110218, NSF DMS grant 2012355, and a Faculty Visiting Fellowship program at ONR 
\email{hrushikesh.mhaskar@cgu.edu}} 

\date{}

\begin{abstract}
In many high-impact applications, it is important  to ensure the quality of output of a machine learning algorithm as well as its reliability in comparison with the complexity of the algorithm used.
In this paper, we have initiated a mathematically rigorous theory to decide which models (algorithms applied on data sets) are close to each other in terms of certain metrics, such as performance and the complexity level of the algorithm.
This involves creating a grid on the hypothetical spaces of data sets and algorithms so as to identify a finite set of probability distributions from which the data sets are sampled and a finite set of algorithms. A given threshold metric acting on this grid will express the nearness (or statistical distance) from each algorithm and data set of interest to any given application. 
A technically difficult part of this project is to estimate the so-called metric entropy of a compact subset of functions of \textbf{infinitely many variables} that arise in the  definition of these spaces.
\end{abstract}

\maketitle

\bhag{Introduction}\label{bhag:intro}
In many high-impact  applications  of  machine learning,  the data is limited and training is challenging. For these applications, it is desirable to have predictions with the highest assurances from the available data while minimizing uncertainty. 
In particular,  it is important  to ensure the quality of output of a machine learning algorithm as well as its reliability in comparison with the complexity of the algorithm used.
The objective of this work is to develop a systematic and mathematically rigorous approach to decide what is the complexity level of algorithm that is sufficient on the task domain to output the desired performance, reliability, and uncertainty. 

One attractive idea in this context is that of Rashomon curves \cite{semenova2019study, fisher2019all}. 
The question is the following: if one finds that an algorithm with a certain complexity level works well on a task on a data set, are there likely to be simpler algorithms that will also work within a certain tolerance of this algorithm? 
More generally, which class of algorithms can be expected to behave similarly on which kind of data sets?
Unfortunately, there seems to be no mathematically precise formulation of this problem.
Our purpose in this paper is to initiate such a rigorous study.

Intuitively, we wish to obtain a grid on the set of data sets and algorithms, i.e., a finite set of data sets and algorithms so that for every algorithm of interest on every data set of interest, there is some point on the grid that is close to the data set and algorithm, as measured by some parameters.    In order to make this more precise, we clarify what the terms "data sets" and "algorithms" mean for our purposes.

We will assume that each data set is a random sample from an unknown probability distribution on a domain. To be precise, we assume that each distribution is supported on some compact subset of an ambient Euclidean space of dimension $q$, without loss of generality, on $[-1,1]^q$.
Of course, different samples may come from the same distribution, in which case there is no theoretical difference between two such data sets. 
On the other hand, problems of sample bias are sometimes dealt with by omitting some of the components from each of these samples.
Naturally, the resulting data has a different distribution, so the reduced data set is considered in this paper to be a different data set from the original.

In view of the Riesz representation theorem and the Banach-Alaoglu theorem, the set of all probability measures is a compact subset of the dual space $C([-1,1])^q)^*$. 
This set is an unmanageably large set representing \emph{every possible} data set that could possibly arise.
 We model the set of data sets of interest by a smaller compact subset $\mathbb{P}$ of the dual space $C([-1,1])^q)^*$.

A lucid description of the meaning of the term "algorithm" and a precise mathematical definition of the term can be found in \cite[Section~1.1]{knuth1997art}.
An algorithm is a function from the input space (the data set) to the output space (real numbers, class labels, etc.) with some additional properties.
As in the notion of Rashomon sets as explained in \cite{semenova2019study, fisher2019all},  one is not interested in the actual algorithms themselves but more in how they perform  different tasks on data sets with respect to certain parameters such as stability, accuracy, complexity level of the algorithms, etc.
It is unlikely that two algorithms will match in terms of all these parameters for all the data sets in question. However, if there are two algorithms (or network architectures with different complexity levels) that lead to the same measurements of these quantities, then there is no need to distinguish between these.
The stability of an algorithm should mean that when two data sets (meaning two  probability distributions) are ``close by,'' then the accuracy and complexity of the algorithm on the two data sets should be close as well.
This is captured by a notion of smoothness of the algorithms considered as functions on the data sets.

We assume a set $\mathbb{A}$ of algorithms that act on each data set in $\mathbb{P}$. 
Each of these algorithm gives rise to a certain number $m$ of parameters.
We are thus interested in a mapping  $F^*: \mathbb{P}\times 
\mathbb{A}\to\RR^m$. 
Without loss of generality, we may assume $m=1$ in this paper.
This is represented in Figure~\ref{fig:basicidea}.
\begin{figure}[ht]
\begin{center}
\includegraphics[width=0.8\textwidth]{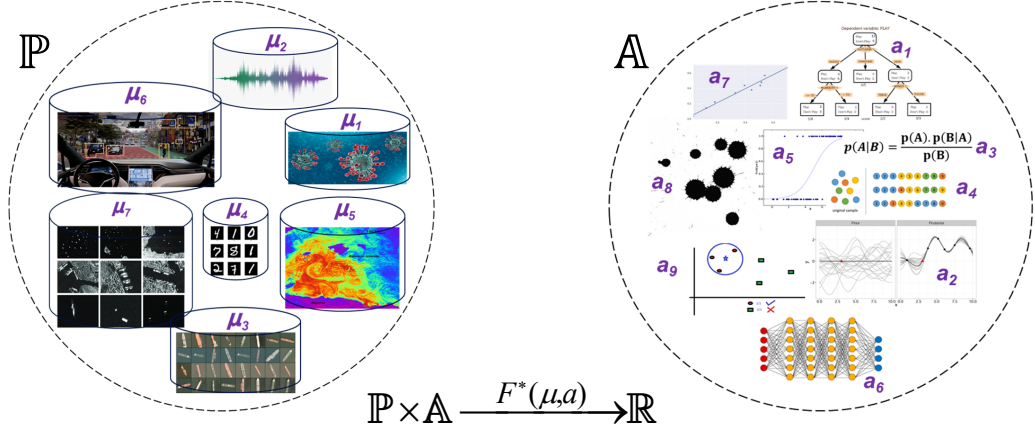} 
\end{center}
\caption{Different data sets may come from the same probability distribution on a domain. We consider $\mathbb{P}$ to be a compact subset of such distributions. $\mathbb{A}$ is the set of algorithms of interest to us, and $F^*$ is the function that maps a given probability distribution and an algorithm to an $m$-dimensional vector of quantities of interest.}
\label{fig:basicidea}
\end{figure}

We don't expect two algorithms to agree on all the data sets with respect to all of these parameters, i.e., we are assuming that if  $a_1,a_2\in\mathbb{A}$ and
$$
F^*(\mu,a_1)=F^*(\mu,a_2)\quad\mbox{ for all } \mu\in\mathbb{P} \Rightarrow a_1=a_2.
$$
This means that every $a\in\mathbb{A}$ corresponds to a \textbf{unique} mapping $F_a$ on $\mathbb{P}$ defined by 
\be\label{eq:algmap}
F_a(\mu)=F^*(\mu,a), \qquad \mu\in\mathbb{P}.
\ee

An algorithm $a\in\mathbb{A}$ is defined to be \emph{stable} if $F_a$ is a continuous function on $\mathbb{P}$ with a properly defined topology on $\mathbb{P}$.

These considerations prompt us to consider a set $\mathfrak{X}$ of continuous functions from $\mathbb{P}$ to $\RR^m$. 
We will assume implicitly that to every element  $F\in\mathfrak{X}$ corresponds a (necessarily unique) algorithm $a\in \mathbb{A}$ such that $F=F_a$ as defined in \eqref{eq:algmap}.
We will then abuse the notation and refer to $F\in\mathfrak{X}$ as an algorithm.

In this paper, we will assume both $\mathbb{P}$ and $\mathfrak{X}$ to be compact metric spaces with appropriate metrics. 
In fact, in view of the Ascoli theorem, $\mathfrak{X}$ is then an equicontinuous family of functions on $\mathbb{P}$.
We then fix a ``tolerance'' $\epsilon>0$, and find $\epsilon$-nets $\mathbb{P}_\epsilon$ and $\mathfrak{X}_\epsilon$ for $\mathbb{P}$ and $\mathfrak{X}$\footnote{If $K$ is a compact subsset of a metric space $X$ and $\epsilon>0$, then a finite set $K_\epsilon\subset X$ is called an $\epsilon$-net for $K$ if $K$ is covered by balls of radius $\epsilon$ centered at points in $K_\epsilon$.}, respectively.
Then $\mathbb{P}_\epsilon\times \mathfrak{X}_\epsilon$ is an $\epsilon$-net for $\mathbb{P}\times\mathfrak{X}$. 
For any data set $\mu\in\mathbb{P}$ and $F\in \mathfrak{X}$ (equivalently, an algorithm $a\in \mathbb{A}$), there is $\mu_1\in 
\mathbb{P}_\epsilon$ and $F_1\in \mathfrak{X}_\epsilon$ (equivalently, an algorithm $a_1$) such that the behavior of $a$ on $\mu$ is $\epsilon$-similar to the behavior of $a_1$ on $\mu_1$. 
Thus, the problem reduces to finding a minimal $\epsilon$-net for $\mathbb{P}\times \mathfrak{X}$ (or, with our identification of the space $\mathbb{A}$ of algorithms with $\mathfrak{X}$, $\mathbb{P}\times \mathbb{A})$) as represented in Figure~\ref{fig:basicidea}.
\begin{figure}[ht]
\begin{center}
\includegraphics[width=0.6\textwidth]{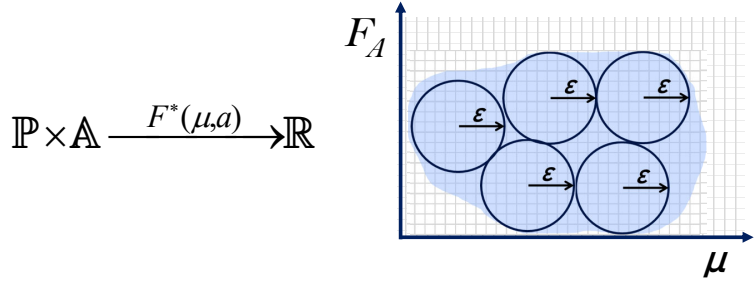} 
\end{center}
\caption{The $\epsilon$-net for the set $\mathbb{P} \times\mathbb{A}$, where $\mathbb{A}$ is identified with a set of functionals on $\mathbb{P}$.  The problem is to estimate the minimal number of balls of radius $\epsilon$ to cover the set; the challenge being the fact that $\mathbb{A}$ is a set of functionals acting on an infinite dimensional space $\mathbb{P}$.}
\label{fig:epsilonRadiusBalls}
\end{figure}

The major technical difficulty here is that $\mathfrak{X}$ is a set of functions on an infinite dimensional space rather than a finite dimensional Euclidean space as is usual in common machine learning problems.
A simplistic idea is to obtain a finite set of parameters for the probability distributions and to treat $\mathfrak{X}$ a set of functions on these. For example, if $\mathbb{P}$ were a set of normal distributions, then the means and standard deviations would describe this set completely.
However, in practice, the distributions are not prescribed in terms of finitely many parameters. 
Indeed, a central technical challenge in  machine learning is that the distributions involved are unknown; in particular, one needs non-parametric methods to deal with these.

It is still possible to restrict ourselves to those distributions that have a smooth density function. 
In turn, this function can be expanded in an orthogonal series, such as a multivariate tensor product Chebyshev polynomial expansion, and the coefficients of this expansion can be used as the parameters for the distribution.
If the density functions are smooth enough, then finitely many low-order coefficients will approximate the density well enough, and elements of $\mathfrak{X}$ can be thought of as functions of these low-order coefficients.

Although this simple idea does reduce the problem to the case of functions on a Euclidean space, there is still a technical problem.
In order to get a good approximation to the density, one needs a large number of coefficients.
The curse of dimensionality then poses a big challenge, requiring much more detailed analysis than what is available in the literature.

The organization of this paper is as follows. In Section~\ref{bhag:basicconcept}, we review the basic concepts of entropy, analytic and entire functions. 
Our main results are stated in Section~\ref{bhag:main}, where we develop an abstract framework, which is then applied to get the estimates on entropies for certain classes of analytic and entire functions, culminating the estimates for a class of functionals. 
In Section~\ref{bhag:computeissue}, we discuss some ideas on how to generate computationally some classes of analytic and entire  functions, as well as  $\epsilon$-nets for a finite dimensional ellipsoids, which form a  theoretical backbone for our estimates.
The proofs of the results in Section~\ref{bhag:main} are given in Section~\ref{bhag:proofs}. 
For the convenience of the reader, we include an appendix, in which we prove certain estimates on the approximation of analytic and entire functions which motivate our definition of the classes defined in Section~\ref{bhag:main}.

\section{Basic concepts}\label{bhag:basicconcept}

In this section, we explain the basic concepts used in this paper. 
Section~\ref{sec:notation} describes the multivariate notation.
Section~\ref{sec:entropy} summarizes the definition of metric entropy and capacity related to the minimal number of balls of a given radius to cover a compact set.
The probability measures to be studied have densities that are analytic, while the functionals are entire functions of finite type defined on an infinite dimensional sequence space. 
These ideas are described in Section~\ref{sec:analytic entire}.
Section~\ref{sec:chebyshev} reviews certain basic notions regarding multivariate Chebyshev polynomials which are used  to encode both analytic and entire functions.

\subsection{Multivariate notation}\label{sec:notation}
In the sequel, we denote by $d\in\NN\cup\{\infty\}$ a generic dimension. 
Vectors will be denoted by boldface letters, e.g., $\x=(x_1,\cdots,x_d)\in\RR^d$. 
The symbol $|\x|_p$ will denote the $\ell^p$ norm of the vector $\x$.
Binary operations among vectors are meant to be in componentwise sense; e.g., $\x\y=(x_1y_1,\cdots,x_dy_d)$, $\x^\y = \prod_{j=1}^d x_j^{y_j}$, $\x/\y=(x_1/y_1,\cdots,x_d/y_d)$. 
Similarly, $\x<\y$ means $x_j<y_j$ for $j=1,\cdots,d$, etc.
The inner product between two vectors $\x,\y$ is denoted by $\x\cdot\y$.
For $r>0$, we write $I_r=[-r,r]$, and for a vector $\mathbf{r}$, $I_{\mathbf{r}}=\prod_{j=1}^d [-r_j,r_j]$. 
Finally, $I=I_1$.
For $0<\rho<1$, the ellipse $U_\rho$ is defined by
$$
U_\rho=\left\{z\in\CC : |z+\sqrt{z^2-1}|<1/\rho\right\},
$$
where the principal branch of the square root is chosen. 
With the Joukowski transformation $w=z+\sqrt{z^2-1}$, $U_\rho$ is mapped onto the disc  $\Gamma_\rho=\left\{w\in\CC : |w|<1/\rho\right\}$.

Let $0<\bs\rho=(\rho_1,\dots,\rho_d)<1$, the poly-ellipse $U_{\bs\rho}$(respectively, the poly-disc $\Gamma_{\bs\rho}$) are defined by $U_{\bs\rho}=\prod_{j=1}^d U_{\rho_j}$ (respectively, $\Gamma_{\bs\rho}=\prod_{j=1}^d \Gamma_{\rho_j}$). 
When $\bs\rho=(\rho,\cdots,\rho)$, we will abuse the notation and write $U_{\rho,d}=U_{(\rho,\dots,\rho)}$. If the dimension is clear in the context, we drop the subscript $d$ and write $U_\rho=U_{(\rho,\dots,\rho)}$.
Similar conventions are adopted also for the poly-discs and rectangular cells.

\subsection{Entropy and Capacity}\label{sec:entropy}

The material in this section is based on \cite[Chapter 15]{lorentz_advanced}.

Let $(X,\|\cdot\|)$ be a normed linear space, $K\subset X$, and $\epsilon>0$ be given.
\begin{itemize}
    \item[(a)] A set $\hat K\subset X$ is called an \emph{$\epsilon$-net} for $K$ if, for each $x\in K$, there is at least one $y\in\hat K$ such that $\|x-y\|\leq\epsilon$.
    \item[(b)] Points $y_1,\dots,y_m\in K$ are called \emph{$\epsilon$-separable} if
    $$\|y_i-y_j\|\geq\epsilon,\quad i\neq j.$$
\end{itemize}

\begin{definition}\label{def:entropydef}
Let $(X,\|\cdot\|)$ be a normed linear space, $K\subset X$ is compact. For any $\epsilon>0$, let $ \mathfrak N_\epsilon(K,\|\cdot\|)$ be the minimal value of $n$ such that there exists an $\epsilon$-net for $K$ consisting of $n$ points. The \textbf{entropy} of $K$ is defined as
\begin{equation}\label{eq:entropydef}
    H_\epsilon(K,\|\cdot\|)=\log \mathfrak{N}_\epsilon(K,\|\cdot\|).
\end{equation}
Let $ \mathfrak M_\epsilon(K,\|\cdot\|$ be the maximal value of $m$ for which there exists $m$ $\epsilon$-separable points for $K$. The \textbf{capacity} of $K$ is defined as
\begin{equation}\label{eq:capacitydef}
    C_\epsilon(K,\|\cdot\|)=\log \mathfrak{M}_\epsilon(K,\|\cdot\|).
\end{equation}

\end{definition}

The connection between capacity and metric entropy is given in the following proposition.
\begin{proposition}\label{prop:capacity}
Let $X$ be a normed linear space.
 For each compact set $K\subset X$ and each $\epsilon>0$,
    \begin{equation}
        C_{2\epsilon}(K,\|\cdot\|)\leq H_\epsilon(K,\|\cdot\|)\leq C_\epsilon(K,\|\cdot\|).
    \end{equation}
\end{proposition}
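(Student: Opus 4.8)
The plan is to take logarithms and, using the monotonicity of $\log$, reduce the claimed chain of inequalities to the purely combinatorial statement
\[
\mathfrak{M}_{2\epsilon}(K,\|\cdot\|)\le \mathfrak{N}_\epsilon(K,\|\cdot\|)\le \mathfrak{M}_\epsilon(K,\|\cdot\|),
\]
relating the maximal number of separated points to the minimal size of an $\epsilon$-net. Since $K$ is compact, hence totally bounded, both quantities are finite, and a maximal $\epsilon$-separable family exists. Both inequalities will then follow from the triangle inequality, applied in two complementary ways.

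For the upper bound $\mathfrak{N}_\epsilon\le\mathfrak{M}_\epsilon$ (equivalently $H_\epsilon\le C_\epsilon$), I would start from a \emph{maximal} $\epsilon$-separable family $\{y_1,\dots,y_m\}\subset K$ with $m=\mathfrak{M}_\epsilon(K,\|\cdot\|)$. The key observation is that maximality forces this family to already be an $\epsilon$-net: if some $x\in K$ satisfied $\|x-y_i\|>\epsilon$ for every $i$, then $\{y_1,\dots,y_m,x\}$ would be $\epsilon$-separable of size $m+1$, contradicting maximality. Hence every $x\in K$ lies within $\epsilon$ of some $y_i$, so $\{y_1,\dots,y_m\}$ is an admissible $\epsilon$-net and $\mathfrak{N}_\epsilon(K,\|\cdot\|)\le m$.

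For the lower bound $\mathfrak{M}_{2\epsilon}\le\mathfrak{N}_\epsilon$ (equivalently $C_{2\epsilon}\le H_\epsilon$), I would fix a minimal $\epsilon$-net $\{z_1,\dots,z_n\}$ with $n=\mathfrak{N}_\epsilon(K,\|\cdot\|)$ together with a $2\epsilon$-separable family $\{y_1,\dots,y_M\}\subset K$ with $M=\mathfrak{M}_{2\epsilon}(K,\|\cdot\|)$, and assign to each $y_k$ an index $\sigma(k)$ with $\|y_k-z_{\sigma(k)}\|\le\epsilon$. A pigeonhole argument then finishes: if $\sigma(k)=\sigma(l)$ for $k\ne l$, the triangle inequality gives $\|y_k-y_l\|\le\|y_k-z_{\sigma(k)}\|+\|z_{\sigma(l)}-y_l\|\le 2\epsilon$, which is incompatible with the $2\epsilon$-separation of the $y$'s. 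Thus $\sigma$ is injective and $M\le n$.

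The main obstacle, and the point requiring care, is the boundary case in this last step: with the net condition written as $\|x-y\|\le\epsilon$ and separation as $\|y_i-y_j\|\ge\epsilon$, two distinct separated points at distance exactly $2\epsilon$ can sit simultaneously in the same closed $\epsilon$-ball, so injectivity of $\sigma$ is only guaranteed once one of the two inequalities is taken strictly (for instance, interpreting separation strictly, as is standard). I would therefore fix the convention so that the closed net balls and the separated family cannot interact on the boundary, after which both the maximality and pigeonhole arguments go through cleanly; the remaining steps are routine applications of the triangle inequality.
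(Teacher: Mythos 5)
The paper does not prove this proposition at all: it is quoted verbatim from \cite[Chapter 15]{lorentz_advanced}, so there is no in-paper argument to compare against. Your proof is the standard one for this classical fact and is correct: maximality of an $\epsilon$-separated family forces it to be an $\epsilon$-net (giving $H_\epsilon\le C_\epsilon$), and the pigeonhole/triangle-inequality argument shows a $2\epsilon$-separated family injects into any $\epsilon$-net (giving $C_{2\epsilon}\le H_\epsilon$). Your flag on the boundary case is also apt: with the paper's conventions ($\|x-y\|\le\epsilon$ for the net and $\|y_i-y_j\|\ge\epsilon$ for separation, both non-strict), two points at distance exactly $2\epsilon$ can indeed lie in the same closed $\epsilon$-ball, so the injectivity step genuinely needs one of the two inequalities to be strict (as it is in Lorentz's own definition of $\epsilon$-distinguishable points); alternatively one can note the statement for all $\epsilon'<\epsilon$ and pass to the limit. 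Either fix is routine, and the rest of your argument goes through as written.
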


\subsection{Analytic  and entire functions}\label{sec:analytic entire}

\begin{definition}[Analytic functions]
    Let $q\in\NN$, $\rho>0$, $f$ is said to be an \textbf{analytic function} on $U_\rho:=\{\z\in\CC^q:\ \left|z_j+\sqrt{z_j^2-1}\right|<1/\rho,\ j=1,\dots,q\}$ if it is complex differentiable at each $\z\in U_\rho$.
\end{definition}

\begin{definition}\label{def:entirefn}{\rm [Entire functions of exponential type]}
 {\rm (a)}   Let $Q\in\NN$, $\tau>0$.
A function $F :\CC^Q\to\CC$ is called an \textbf{entire function of exponential type $\tau$} if
    \begin{itemize}
        \item[(i)] $F$ is an entire function in all of its variables, i.e., $F$ has an absolutely convergent  power series expansion
        $$F(\z)=\sum\limits_{\z\in\NN^Q}a_\k\z^\k, \qquad \z\in \CC^Q$$
        with constant coefficients $a_\k\in\CC$.
        \item[(ii)] For any $\epsilon>0$ there exists a positive number $A_\epsilon$ such that for all $\z\in\CC^Q$, the inequality
        $$|F(\z)|\leq A_\epsilon\exp\left((\tau+\epsilon)\sum\limits_{j=1}^Q|z_j|\right)$$
        is satisfied.
    \end{itemize}
    {\rm (b)} If $\mathbf{v}=(v_1,\dots,v_Q)\in\RR_+^Q$, then $F$ is said to be an \textbf{entire function of exponential type $\v$} if the function $\z\mapsto F(z_1/v_1,\cdots,z_Q/v_Q)$ is an entire function of finite exponential type $1$. \\
 {\rm (c)} Let $\mathbf{v}\in \ell^1(\CC)$. A function $F:c_0(\CC)\to\CC$ is called an \textbf{entire function of finite exponential type $\mathbf{v}$} if, for every $Q\in \NN$, the function $(z_1,\cdots,z_Q)\to F(z_1,\cdots, z_Q,0,0,\cdots)$ is an entire function of finite exponential type $(v_1,\cdots,v_Q)$. 

\end{definition}

An important example of entire functions of finite exponential type on $c_0(\CC)$ is the mapping
$$
\z\in c_0(\CC)\mapsto \int \exp(-i\z\cdot\x)d\mu(\x),
$$
where $\mu$ is a probability measure supported on the infinite cube $[-1,1]^\infty$.

\subsection{Chebyshev polynomials}\label{sec:chebyshev}

Let $d\in \NN$. 
\begin{equation}\label{eq:chebwt}
v_d(\x)=\pi^{-d}\prod_{k=1}^d(1-x_k^2)^{-1/2}, \qquad \x=(x_1,\cdots, x_d)\in I^d.
\end{equation}
The space $L^p(I^d)$ will refer to the space of all $f$ for which
\be\label{eq:lpd_def}
\|f\|_{d,p}=\begin{cases}
\disp\left\{\int_{I^d} |f(\x)|^pv_d(\x)d\x\right)^{1/p}, &\mbox{ if $0<p<\infty$,}\\[1ex]
\disp\esssup_{\x\in I^d}|f(\x)|, &\mbox{if $p=\infty$,}
\end{cases}
\ee
is finite. 
As usual, we will identify two functions if they are equal almost everywhere.

We denote the space of all polynomials in $d$ variables of coordinatewise degree $<n$ by  $\Pi_n^d$.

Next, we define Chebyshev polynomials.
We define Chebyshev polynomials in the univariate case by first setting $x=\cos\theta$ for $x\in [-1,1]$ and define
\begin{equation}\label{eq:unicheb}
p_k(x)=\begin{cases}
1, &\mbox{ if $k=0$},\\
\sqrt{2}\cos (k\theta), &\mbox{if $k=1,2,\cdots$}.
\end{cases}
\end{equation}
We note that the expression $p_k$ is a polynomial of degree $k$ in $x$, and the normalization is set so that
\begin{equation}\label{eq:uniortho}
\int_{-1}^1 p_k(x)p_j(x)v_1(x)dx=\delta_{k,j}.
\end{equation}
The multivariate Chebyshev polynomials are defined by
\begin{equation}\label{eq:multicheb}
p_\k(\x)=\prod_{j=1}^d p_{k_j}(x_j), \qquad \k=(k_1,\cdots, k_d)\in \ZZ_+^d, \ \x=(x_1,\cdots,x_d)\in I^d,
\end{equation}
and satisfy
\begin{equation}\label{eq:multuortho}
\int_{I^d}p_\k(\x)p_\j(\x)v_d(\x)d\x=\delta_{\k,\j}.
\end{equation}
We note that even though we have defined the Chebyshev polynomials by their values on $I^d$, they are actually defined on $\CC^d$ because they are polynomials.

Any function $f\in L^2(I^d)$ admits an formal expansion
\begin{equation}\label{eq:chebexpansion}
f=\sum_{\k\in\NN^d}\hat{f}(\k)p_\k,
\end{equation}
where the \textbf{Chebyshev coefficients} are defined by
\begin{equation}\label{eq:fourcoeff}
\hat{f}(\k)=\int_{I^d}f(\y)p_\k(\y)v_d(\y)d\y, \qquad \k\in \ZZ_+^d.
\end{equation}

For $f\in L^1(I^d)$, we define the partial sums of \eqref{eq:chebexpansion} by
\begin{equation}\label{eq:partialsum}
s_n(f)(\x)=\sum_{|\k|_1< n}\hat{f}(\k)p_\k(\x), \qquad S_n(f)(\x)=\sum_{|\k|_1=n}\hat{f}(\k)p_\k(\x), \qquad n\in\NN.
\end{equation}

There is an important formula that relates Chebyshev expansions with Laurent expansions of meromorphic functions. 
We note that for $0<\rho<1$, the two branches of the Joukowski transform $w=z+\sqrt{z^2-1}$ map $U_\rho$ to the annulus $\rho<|w|<1/\rho$. Hence, for a function $f$ analytic on $U_\rho$ for some $\rho>0$, the function $g(w)=f((w+w^{-1})/2$ is analytic on the annulus. 
The Laurent expansion of $g$ is given by
\be\label{eq:chebtolaurent}
\hat{f}(0)+(1/2)\sum_{k=1}^\infty \hat{f}(k)(w^k+w^{-k}).
\ee
Thus, the coefficients, the partial sums, and the remainder $f-s_n(f)$ can be expressed as contour integral over appropriate circles in the $w$ plane.
For multivariate functions, of course, one uses tensor products of circles.

If $\bs r>0$, we may define Chebyshev polynomials on $I_{\bs r}$ by 
\be\label{eq:anysochebdef}
p_{\k,\bs r}(\x)=\prod_{j=1}^d p_{k_j}(\x/\bs r),
\ee
 and the corresponding weights by 
\be\label{eq:anisochebwt}
v_{\bs r}(\x)=\pi^{-d}\left(\prod_{j=1}^d (r_j^2-x_j^2)\right)^{-1/2}.
\ee
Of course, one has the orthogonality relation
\be\label{eq:anysocheborth}
\int_{I_{\bs r}} p_{\k,\bs r}(\x)p_{\m,\bs r}(\x)v_{\bs r}(\x)d\x=\delta_{\k,\m}.
\ee
The Chebyshev coefficients and partial sums are defined in an obvious way and will be indicated by an extra subscript $\bs r$; e.g., $s_{n,\bs r}$.



\section{Main results}\label{bhag:main}

In this section, we define compact spaces of analytic and entire functions and state our theorems about their entropies. 
In Section~\ref{sec:directsumprod}, we encapsulate the procedure in some abstraction.
The spaces for analytic functions and their entropy estimates are given in Section~\ref{sec:analytic}.
Analogous results for entire functions are given in  Section~\ref{sec:entire}.
We conclude with estimates on the entropy of functionals in Section~\ref{sec:functional}.

\subsection{Direct sums and products}\label{sec:directsumprod} 

Let $X$ be a Banach space. We assume that there exists a sequence of finite dimensional subspaces $X_j$, $j=0,1,\dots$, $b_j=\mathsf{dim}(X_j)$, $Y_k=\bigoplus\limits_{j=0}^{k-1}X_j$, $d_k=\mathsf{dim}(Y_k)=\sum\limits_{j=0}^{k-1} b_j$, such that $\bigcup\limits_{k=0}^\infty Y_k$ is dense in $X$.
In particular, we assume that for any $f\in X$, there is a unique sequence $\{f_j\in X_j\}_{j=0}^\infty$ such that we have a formal expansion of the form $f\sim \sum_j f_j$. (An example is the space $C(I^q)$, $X_j=\Pi_j^q$, and $f_j=S_{j-1}(f)$, as in \eqref{eq:partialsum}.)
We write $\mathsf{Proj}_j(f)=f_j$, and assume that $\mathsf{Proj}_j$ is a continuous operator for each $j$. Generalizing the notation established in Section~\ref{sec:chebyshev}, we define
$$
s_k(f)=\sum_{j=0}^{n-1} \mathsf{Proj}_j(f),\quad S_k(f)=\mathsf{Proj}_k(f).
$$
Let $\mathfrak{K}$ be a compact subset of $X$. Then
$$
\lim_{n\to\infty}\sup_{f\in \mathfrak{K}}\mathsf{dist}(f,Y_n)=0. 
$$
In this paper, we are interested in $\mathfrak{K}$ such that
\be\label{eq:abstract_four_conv}
\lim_{n\to 0}\sup_{f\in \mathfrak{K}}\|f-s_n(f)\|=0.
\ee
More precisely, with a summable sequence $\{\Delta_j\}_{j=0}^\infty$ of positive numbers, we define
\be\label{eq:compact_set_def}
\mathfrak{K}=\left\{f\in X:\ \|S_j(f)\|\leq\Delta_j,\ j\in\NN\right\}, \qquad \mathfrak{K}_j=\mathsf{Proj}_j(\mathfrak{K}), \qquad \widetilde{\mathfrak{K}}_n=\bigoplus_{j=0}^{n-1}\mathfrak{K}_j.
\ee
Let $\epsilon>0$. 
In order to estimate the entropy of $\mathfrak{K}$, we observe first that
in view of \eqref{eq:abstract_four_conv},
 there exists some $n\in\NN$ such that 
 $$
 \sup_{f\in \mathfrak{K}}\|f-s_n(f)\|\leq\epsilon/2.
 $$ 
Thus, any $\epsilon/2$-net of the set $\widetilde{\mathfrak{K}}_n$ is an $\epsilon$-net of $\mathfrak{K}$, and any $\epsilon$-net of the set $\mathfrak{K}$ is an $\epsilon$-net of $\widetilde{\mathfrak{K}}_n$. Thus,
\be\label{eq:abstract_reduction}
H_\epsilon\left(\widetilde{\mathfrak{K}}_n,X\right)\leq H_\epsilon\left(\mathfrak{K},X\right)\leq H_{\epsilon/2}\left(\widetilde{\mathfrak{K}}_n,X\right).
\ee
Therefore, in order to estimate the entropy of $\mathfrak{K}$, we only need to esitimate the entropy of $\widetilde{\mathfrak{K}}_n$.

For this purpose, it is convienient  to identify $\widetilde{\mathfrak{K}}_n$ with a tensor product of balls.

We consider the space $X_{\Pi,n}=\prod\limits_{j=0}^{n-1}X_j$, and the mapping $\mathcal{T}_n(f)=(\mathsf{Proj}_0(f), \cdots, \mathsf{Proj}_{n-1}(f))$ from $\bigoplus\limits_{j=0}^{n-1}X_j$ to $X_{\Pi,n}$.
Obviously, $\mathcal{T}_n$ is a one-to-one mapping.
If    $1\le p\le \infty$, we may define a norm on $X_{\Pi,n}$ by
\be\label{eq:productnorm}
\|\mathcal{T}_n(f)\|_{\Pi,p,n}=\left| (\|\mathsf{Proj}_0(f)\|, \dots,\|\mathsf{Proj}_{n-1}(f)\|)\right|_p.
\ee
Since all the spaces involved are finite dimensional, there exist positive constants $A_{n,p}, B_{n,p}$ such that
\be\label{eq:abstractnikolskii}
A_{n,p}\|f\|\le \|\mathcal{T}_n(f)\|_{\Pi,p,n}\le B_{n,p}\|f\|,\quad f\in Y_n.
\ee
Next, we note that $\mathfrak{K}_j$ is a ball in the finite dimensional space $X_j$:
\be\label{eq:Kdegapprox}
\mathfrak{K}_j=\left\{f\in X_j: \|f\|\le \Delta_j\right \},\quad j\in\NN.
\ee
So, we can view $\widetilde{\mathfrak{K}}_n$ via the mapping $\mathcal{T}_n$ as a product of the balls $\mathfrak{K}_j$. 
The entropy of such a product is given in  \cite[Proposition~1.3]{lorentz_advanced}.
To summarize, the entropy of $\mathfrak{K}$ can be estimated as in the following theorem.

\begin{theorem}\label{theo:abstractentropy}
Let $\epsilon>0$, $1\le p, r\le \infty$, and  we recall the notation established in \eqref{eq:abstractnikolskii}, \eqref{eq:Kdegapprox}. We have
\begin{equation}\label{eq:abstractentropy}
\begin{split}
\sum_{j=0}^{N-1} b_j\log\left(\frac{\Delta_j}{2B_{N,r}\epsilon}\right)\leq H_{\epsilon}(\mathfrak{K})\le \sum_{j=0}^{M-1} b_j\log\left(\max\left(\frac{6M^{1/p}\Delta_j}{ A_{M,p}\epsilon},1\right)\right),
\end{split}
\end{equation}
which holds for all $N\geq1$ and $M\geq\mathcal N(\epsilon/2):=\min\left\{m\in\NN:\ \sum\limits_{n=m}^\infty\Delta_n\leq\epsilon/2\right\}$.
\end{theorem}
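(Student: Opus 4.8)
The plan is to combine the reduction \eqref{eq:abstract_reduction} with the identification of $\widetilde{\mathfrak{K}}_n$ with the product of balls $\prod_{j=0}^{n-1}\mathfrak{K}_j$ through the map $\mathcal{T}_n$, to transfer covering estimates between the $X$-norm and the product norm $\|\cdot\|_{\Pi,p,n}$ via the Nikolskii-type inequalities \eqref{eq:abstractnikolskii}, and finally to apply the entropy estimate for a product of balls (\cite[Proposition~1.3]{lorentz_advanced}) together with the elementary covering number of a ball in a $b$-dimensional space. The upper and lower bounds are treated separately and, as the statement indicates, use different indices ($M$ versus $N$) and different Nikolskii constants ($A_{M,p}$ versus $B_{N,r}$); this asymmetry reflects the fact that the lower bound needs no tail control whereas the upper bound does.

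For the upper bound I fix $M\ge\mathcal{N}(\epsilon/2)$. Then $\|f-s_M(f)\|\le\sum_{j\ge M}\|S_j(f)\|\le\sum_{j\ge M}\Delta_j\le\epsilon/2$ uniformly on $\mathfrak{K}$, so the right half of \eqref{eq:abstract_reduction} gives $H_\epsilon(\mathfrak{K})\le H_{\epsilon/2}(\widetilde{\mathfrak{K}}_M,X)$. From the lower inequality $A_{M,p}\|f\|\le\|\mathcal{T}_M(f)\|_{\Pi,p,M}$ in \eqref{eq:abstractnikolskii} I conclude that pulling back any $(A_{M,p}\epsilon/2)$-net of $\mathcal{T}_M(\widetilde{\mathfrak{K}}_M)$ in the product norm yields an $(\epsilon/2)$-net of $\widetilde{\mathfrak{K}}_M$ in $X$, so $H_{\epsilon/2}(\widetilde{\mathfrak{K}}_M,X)\le H_{A_{M,p}\epsilon/2}(\prod_{j=0}^{M-1}\mathfrak{K}_j,\|\cdot\|_{\Pi,p,M})$. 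Splitting the radius equally among the $M$ factors, i.e. covering each $\mathfrak{K}_j$ to radius $A_{M,p}\epsilon/(2M^{1/p})$, produces a product net, whence the product entropy is at most $\sum_{j=0}^{M-1}H_{A_{M,p}\epsilon/(2M^{1/p})}(\mathfrak{K}_j,X_j)$. Since each $\mathfrak{K}_j$ is by \eqref{eq:Kdegapprox} a ball of radius $\Delta_j$ in the $b_j$-dimensional space $X_j$, the standard covering bound $\mathfrak{N}_\eta\le(\max(3\Delta_j/\eta,1))^{b_j}$ gives the stated right-hand side.

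For the lower bound I fix any $N\ge1$. Since $\widetilde{\mathfrak{K}}_N\subseteq\mathfrak{K}$, every $\epsilon$-net of $\mathfrak{K}$ in $X$ is automatically an $\epsilon$-net of $\widetilde{\mathfrak{K}}_N$, so $H_\epsilon(\mathfrak{K})\ge H_\epsilon(\widetilde{\mathfrak{K}}_N,X)$ with no tail condition required. Given an $\epsilon$-net of $\widetilde{\mathfrak{K}}_N$ in $X$, I first relocate its centers into $\widetilde{\mathfrak{K}}_N\subseteq Y_N$ at the cost of doubling the radius, and then push forward by $\mathcal{T}_N$; using the upper inequality $\|\mathcal{T}_N(f)\|_{\Pi,r,N}\le B_{N,r}\|f\|$ this produces a $(2B_{N,r}\epsilon)$-net of $\prod_{j=0}^{N-1}\mathfrak{K}_j$ in the product norm, so $H_\epsilon(\widetilde{\mathfrak{K}}_N,X)\ge H_{2B_{N,r}\epsilon}(\prod_{j=0}^{N-1}\mathfrak{K}_j,\|\cdot\|_{\Pi,r,N})$. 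I then bound this product entropy from below by a volume argument in the $d_N$-dimensional space $Y_N$: writing $\eta:=2B_{N,r}\epsilon$ and letting $B$ denote the unit ball of $\|\cdot\|_{\Pi,r,N}$, the covering number of $\prod_j\mathfrak{K}_j$ at radius $\eta$ is at least $\mathrm{vol}(\prod_j\mathfrak{K}_j)/\mathrm{vol}(\eta B)$. Because for $r\ge1$ the $\ell^r$ norm dominates the $\ell^\infty$ norm one has $B\subseteq\prod_{j=0}^{N-1}B_{X_j}$, so the denominator is at most $\eta^{d_N}\prod_j\mathrm{vol}(B_{X_j})$, while the numerator equals $\prod_j\Delta_j^{b_j}\,\mathrm{vol}(B_{X_j})$; the unit-ball volumes cancel and the ratio is at least $\prod_j(\Delta_j/\eta)^{b_j}$, whose logarithm is exactly the stated left-hand side.

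I expect the main obstacle to be the lower bound, and specifically the two places where geometry enters: relocating the net centers into the subspace $Y_N$ so that $\mathcal{T}_N$ may legitimately be applied (this is what forces the factor $2$ appearing in $2B_{N,r}\epsilon$), and the volume comparison $\mathrm{vol}(B)\le\prod_j\mathrm{vol}(B_{X_j})$ for the mixed $\ell^r$ ball, which rests on the inclusion $B\subseteq\prod_j B_{X_j}$. The upper bound is comparatively routine once the reduction and the equal splitting of the covering radius are in place; the only care needed there is to retain the $\max(\cdot,1)$ so that factors whose radius $\Delta_j$ is smaller than the local covering radius contribute trivially to the sum.
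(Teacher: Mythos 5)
Your proof is correct and gives exactly the stated constants. The upper bound is essentially the paper's argument: reduce to $\widetilde{\mathfrak{K}}_M$ via the tail condition, transfer to the product norm using the lower Nikolskii inequality $A_{M,p}\|f\|\le\|\mathcal{T}_M(f)\|_{\Pi,p,M}$, split the radius equally among the $M$ factors, and invoke the covering bound $\bigl(\max(3\Delta_j/\eta,1)\bigr)^{b_j}$ for a ball in a $b_j$-dimensional space. The lower bound, however, takes a genuinely different route. The paper works with capacity: it takes a maximal $\eta$-separated subset of each ball $\mathfrak{K}_j$, observes that the product is $\eta$-separated in the product norm (hence its preimage under $\mathcal{T}_N$ is $2\epsilon$-separated in $X$), and concludes via $C_{2\epsilon}(\mathfrak{K})\le H_\epsilon(\mathfrak{K})$ together with the packing bound $C_{2\eta}(B_r)\ge d\log(r/(2\eta))$ from \cite[Proposition~1.3]{lorentz_advanced}. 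You instead relocate net centers into $Y_N$ (which is where your factor $2$ enters, playing the role of the paper's $C_{2\epsilon}\le H_\epsilon$), push forward by $\mathcal{T}_N$, and run a single global volume comparison in $\RR^{d_N}$, using $B\subseteq\prod_j B_{X_j}$ (valid since $|\cdot|_\infty\le|\cdot|_r$ for $r\ge1$) so that the unit-ball volumes cancel and the ratio is $\prod_j(\Delta_j/\eta)^{b_j}$. Your version is self-contained — it never needs the notion of capacity or the per-ball packing lemma — at the cost of fixing a Lebesgue measure on $Y_N\cong\RR^{d_N}$; the paper's version factors the problem coordinate-by-coordinate and outsources the finite-dimensional estimate to the cited proposition. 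Both yield the identical left-hand side $\sum_{j=0}^{N-1}b_j\log\bigl(\Delta_j/(2B_{N,r}\epsilon)\bigr)$, and in fact your derivation is cleaner than the paper's at the one point where the paper's choice of separation parameter $\eta_1$ is stated inconsistently with the direction of the Nikolskii inequality.
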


\subsection{Spaces of analytic functions}\label{sec:analytic}
Let $q\in\NN$, $\rho\in(0,1)$. 
In view of Theorem \ref{theo:identifyanal}, we define the class of analytic functions by
\begin{equation}\label{defA}
\mathcal A_\rho=\left\{f: I^q\to\RR:\ \left\|S_{n}(f)\right\|_{L^2( I^q)}\leq\rho^{n},\quad n\in\NN\right\},
\end{equation}
The goal of this section is to prove Theorem~\ref{entropy analy} to estimate the entropy of $\mathcal{A}_\rho$.

We will use Theorem \ref{theo:abstractentropy} with $A_{n,2}=B_{n,2}=1$, $\Delta_j=\rho^j$, $b_j=\binom{j+q-1}{q-1}$, $j=0,1,\dots$, to obtain the following theorem.

\begin{theorem}\label{entropy analy}

\begin{itemize}
\item [(a)] For $\epsilon<\frac{2}{\sqrt{1-\rho^2}}\left(\frac{9}{2}(\rho^{-2}-1)(q+1)\right)^{(q+1)\log\frac{1}{\rho}}$,
\begin{equation}\label{entropyA2}
H_\epsilon(\mathcal{A},\|\cdot\|_{L_2( I^q)})\leq\frac{4e^{q+1}}{\sqrt{2\pi}}\left(1+\frac{\log\left(\frac{2\rho}{\sqrt{1-\rho^2}}\frac{1}{\epsilon}\right)}{(q+1)\log\frac{1}{\rho}}\right)^{q+1}.
\end{equation}
\item [(b)] For $\epsilon<1/2$,
\begin{equation}\label{entropyA2lower}
H_\epsilon(\mathcal{A},\|\cdot\|_{L_2( I^q)})\geq\frac{2^{q+1}}{8\sqrt{2\pi(q+1)}}\left(1+\frac{\log\left(\frac{\rho^2}{4}\frac{1}{\epsilon}\right)}{(q+1)\log\frac{1}{\rho}}\right)^{q+1}.
\end{equation}
\item [(c)] Furthermore, for $\epsilon$ sufficiently small, the entropy of $\mathcal A_\rho$ satisfies
\begin{equation}\label{entropyA}
1-\frac{2(q+1)\log\frac{2}{\rho}}{\log\frac{1}{\epsilon}}\leq\frac{H_{\epsilon}(\mathcal A_\rho,\|\cdot\|_{L_2( I^q)})}{\frac{\log\frac{1}{\rho}}{(q+1)!}\left(\frac{\log\frac{1}{\epsilon}}{\log\frac{1}{\rho}}\right)^{q+1}}\leq1+\frac{2(q+1)\log\frac{1}{\rho}}{\log\frac{1}{\epsilon}}\left(\log\log\frac{1}{\epsilon}+\log\frac{18\sqrt{1-\rho^2}}{\rho^q}\right).
\end{equation}

\end{itemize}

\end{theorem}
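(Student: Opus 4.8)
The plan is to apply the abstract estimate of Theorem~\ref{theo:abstractentropy} in the concrete setting $X = L^2(I^q)$, with $X_j$ the span of the Chebyshev polynomials $p_\k$ of total degree $|\k|_1 = j$, so that $\mathsf{Proj}_j = S_j$, $b_j = \binom{j+q-1}{q-1}$, and $\Delta_j = \rho^j$ (the last being exactly the constraint defining $\mathcal{A}_\rho$ in \eqref{defA}). The decisive structural fact is that the $S_j(f)$ are mutually orthogonal in $L^2(I^q)$: the product $\ell^2$-norm then coincides with the ambient norm, giving $A_{n,2} = B_{n,2} = 1$, and the same orthogonality sharpens the tail driving the reduction \eqref{eq:abstract_reduction}. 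Indeed, for $f \in \mathcal{A}_\rho$ we have $\|f - s_n(f)\|_{L^2(I^q)}^2 = \sum_{j \geq n}\|S_j(f)\|^2 \leq \sum_{j \geq n}\rho^{2j} = \rho^{2n}/(1-\rho^2)$, so the reduction in the Hilbert space uses the $\ell^2$ tail rather than the cruder $\ell^1$ tail hidden in $\mathcal{N}(\epsilon/2)$; this is precisely why the factor $\sqrt{1-\rho^2}$, and not $1-\rho$, surfaces in part~(a). I would therefore take the truncation level $M$ to be the least integer with $\rho^M/\sqrt{1-\rho^2} \leq \epsilon/2$, i.e. $M \approx \log\big(2/(\epsilon\sqrt{1-\rho^2})\big)/\log\tfrac1\rho$, reducing both parts to estimating the two sums in \eqref{eq:abstractentropy}.

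For the upper bound (a) I would substitute $p=2$, $A_{M,2}=1$, $\Delta_j=\rho^j$ into the right-hand sum, so each summand is $b_j\log\max\big(6M^{1/2}\rho^j/\epsilon,\,1\big)$. Writing $L=\log\tfrac1\rho$, the $\max$ with $1$ truncates the sum at the index $J := \log(6M^{1/2}/\epsilon)/L$ where the logarithm changes sign, and each surviving summand equals $b_j\big(\log(6M^{1/2}/\epsilon) - jL\big)$. Using $b_j = \binom{j+q-1}{q-1} \leq j^{q-1}/(q-1)!$ and comparing with the integral $\int_0^J t^{q-1}\big(\log(6M^{1/2}/\epsilon) - tL\big)\,dt$, the two monomials combine through $\tfrac1q - \tfrac1{q+1} = \tfrac1{q(q+1)}$ to give a leading term proportional to $L\,J^{q+1}/(q+1)!$. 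Rewriting the resulting power in the normalized form $\big(1 + \log(\tfrac{2\rho}{\sqrt{1-\rho^2}}\tfrac1\epsilon)/((q+1)L)\big)^{q+1}$, and absorbing the slowly varying factor $M^{1/2}$ and the numerical constant $6$ into a Stirling-estimated prefactor, produces the stated bound with the constant $4e^{q+1}/\sqrt{2\pi}$.

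For the lower bound (b) I would use the left inequality of \eqref{eq:abstractentropy} with $r=2$, $B_{N,2}=1$, giving $H_\epsilon \geq \sum_{j=0}^{N-1}b_j\log\big(\rho^j/(2\epsilon)\big)$, and choose $N$ to be the largest index keeping every summand positive, namely $N \approx \log(1/(2\epsilon))/L$. Bounding $b_j \geq j^{q-1}/(q-1)!$ from below and comparing again with an integral yields the same leading $L\,N^{q+1}/(q+1)!$ behaviour, while a two-sided Stirling estimate of the binomial sum accounts for the prefactor $2^{q+1}/(8\sqrt{2\pi(q+1)})$ and the argument $\rho^2/4$ inside the logarithm. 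Part~(c) then follows by extracting the common leading term $\frac{\log\frac1\rho}{(q+1)!}\big(\frac{\log\frac1\epsilon}{\log\frac1\rho}\big)^{q+1}$ from (a) and (b) and controlling the corrections: in the upper bound the dominant correction is the $\log\log\frac1\epsilon$ contributed by the $M^{1/2}$ factor, and in the lower bound it is the linear-in-$1/\log\frac1\epsilon$ term coming from the boundary of the truncated sum.

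The main obstacle I anticipate is bookkeeping rather than concept: the leading asymptotics drop out of the integral comparison immediately, but pinning down the explicit prefactors $4e^{q+1}/\sqrt{2\pi}$ and $2^{q+1}/(8\sqrt{2\pi(q+1)})$ demands two-sided Stirling bounds for $\binom{j+q-1}{q-1}$ that are uniform in both $j$ and $q$, and one must confirm that the slowly growing factor $M^{1/2}$ (equivalently the $\log\log\frac1\epsilon$ term) is genuinely of lower order than the $(\log\frac1\epsilon)^{q+1}$ main term, so that the sandwich in (c) is asymptotically tight. Keeping the truncation indices $M$ and $N$ mutually consistent, while respecting the fact that in the $L^2$ setting the admissible $M$ is governed by the orthogonal $\ell^2$ tail rather than the $\ell^1$ tail appearing in the definition of $\mathcal{N}(\epsilon/2)$, is the most delicate point to get right.
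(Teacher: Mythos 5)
Your plan is the paper's own proof in all essentials: apply Theorem~\ref{theo:abstractentropy} with $X=L^2(I^q)$, $X_j=\mathsf{span}\{p_\k:|\k|_1=j\}$, $b_j=\binom{j+q-1}{q-1}$, $\Delta_j=\rho^j$, $A_{n,2}=B_{n,2}=1$ from Parseval, and a truncation level governed by the orthogonal $\ell^2$ tail $\rho^{2M}/(1-\rho^2)$ rather than the $\ell^1$ tail in the definition of $\mathcal N(\epsilon/2)$ --- your remark that this is exactly where the $\sqrt{1-\rho^2}$ comes from is correct and matches the paper's choice $N_1=\lfloor(\log\frac{2}{\epsilon}+\log\frac{1}{\sqrt{1-\rho^2}})/\log\frac1\rho\rfloor$, as does your choice of $N$ for the lower bound.

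The one concrete error is the inequality $\binom{j+q-1}{q-1}\le j^{q-1}/(q-1)!$ that you invoke for the upper bound in (a): it is reversed. Since $\binom{j+q-1}{q-1}=\frac{(j+q-1)\cdots(j+1)}{(q-1)!}$ and each factor $j+i\ge j$, one has $\binom{j+q-1}{q-1}\ge j^{q-1}/(q-1)!$ (e.g.\ $j=1$, $q=2$ gives $2\not\le 1$), so your integral comparison would \emph{underestimate} the sum precisely where an upper bound is needed; the same expression is legitimately a lower bound in part (b). The paper sidesteps this by evaluating the sums exactly with the hockey-stick identities of Lemma~\ref{lemma:combinatorics}, namely $\sum_{j=0}^{n}\binom{j+q-1}{q-1}=\binom{n+q}{q}$ and $\sum_{j=0}^{n}(n-j)\binom{j+q-1}{q-1}=\binom{n+q}{q+1}$, and only then applying the two-sided Stirling estimates of Lemma~\ref{lemma:asym comb} to the single binomial $\binom{N+q}{q+1}$; this is also where the prefactors $\frac{4e^{q+1}}{\sqrt{2\pi}}$ and $\frac{2^{q+1}}{8\sqrt{2\pi(q+1)}}$ and the $\log\log\frac1\epsilon$ correction in (c) actually come from. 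Your route is repairable (replace $j^{q-1}$ by $(j+q-1)^{q-1}$ in the upper direction), but as written the key step of part (a) does not go through, and the constant-tracking you correctly identify as the delicate point is exactly what it breaks.
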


\subsection{Spaces of entire functions}\label{sec:entire}

In this section, we are interested in the class of entire functions of finite exponential type, defined in \eqref{def:entire} below. We will use Theorem \ref{theo:abstractentropy} again to estimate the entropy of this class. 
The main difficulty in this section is keeping track of the dependence of the dimension $Q$. This is important when we consider functional classes in Section \ref{sec:functional}.

Let $Q\in\NN$, $\tau \in\left[1,\frac{Q}{2e^{3/2}\pi}\right]$, $\r\in\RR_+^Q$, $C=\left(\frac{2\pi}{Q}\right)^{Q/2}$, and let
$$\Lambda(N)=CN^{Q/2}\frac{\tau ^N}{N!},\quad N\in\NN.$$

Let $I_\r=\prod\limits_{j=1}^Q[-r_j,r_j]$ be a subset of $\RR^Q$. In view of Theorem \ref{thm entire} and \ref{converse}, we can define the class of entire functions by
\begin{equation}\label{def:entire}
\mathcal B_Q=\mathcal B_Q(\r,\tau )=\left\{F:I_\r\to\RR:\ \left\|S_N(F)\right\|_{L_\infty( I_\r)}\leq\Lambda(N),\  N\in\NN\right\}.
\end{equation}

\begin{theorem}\label{entropy entire}
\begin{itemize}
\item [(a)] Under the condition that
\be\label{eq:epsupcond}
\epsilon\leq\left(\frac{2\pi e\tau }{Q}\right)^{Q/2}\frac{4}{(e\tau )^{1/2}\exp(e^2\tau )},
\ee
the entropy of $\mathcal B_Q$ defined in \eqref{def:entire} satisfies
\begin{equation}\label{entropyB2}
\begin{split}
&H_\epsilon(\mathcal B_Q,\|\cdot\|_{L_\infty( I_\r)})\\
\leq&\frac{2}{3\sqrt{2\pi}}\left(\frac{2e}{Q}\right)^Q\left(\frac{\log\frac{4}{\epsilon}+\frac{Q}{2}\log\frac{2e\pi \tau }{Q}}{\log\left(\log\frac{4}{\epsilon}+\frac{Q}{2}\log\frac{2e\pi \tau }{Q}\right)-\log(e\tau )}+\frac{3Q}{4}\right)^{Q+1}\left(5\log\log\frac{1}{\epsilon}+\log\left((Q+1)^2(e\tau)^6\right)\right).
\end{split}
\end{equation}
\item [(b)] Let
$$\xi_\tau=\frac{16\max\{3e^2\tau,128\}\log\left(\max\{3e^2\tau,128\}\right)}{e}+2,$$
under the condition that
\be\label{eq:epslowcond}\epsilon\leq\left(\frac{2\pi}{Q}\right)^{Q/2}\frac{1}{4\sqrt{2\pi e\tau}}\xi_\tau^{-2\xi_\tau},\ee
\begin{equation}\label{entropyB2lower}
\begin{split}
H_\epsilon(\mathcal B_Q,\|\cdot\|_{L_\infty( I_\r)})\geq&\frac{1}{16\sqrt{\pi Q}}\left(\frac{1}{Q}\right)^Q\left(\frac{\log\frac{1 }{4\sqrt{2\pi e\tau}\epsilon}+\frac{Q}{2}\log\left(\frac{2\pi}{Q}\right)}{\log\left(\log\frac{1 }{4\sqrt{2\pi e\tau}\epsilon}+\frac{Q}{2}\log\left(\frac{2\pi}{Q}\right)\right)-\log(e\tau )}-\frac{5}{2}+Q\right)^Q\\
&\times\left(\frac{\log\frac{1 }{4\sqrt{2\pi e\tau}\epsilon}+\frac{Q}{2}\log\left(\frac{2\pi}{Q}\right)}{\log\left(\log\frac{1 }{4\sqrt{2\pi e\tau}\epsilon}+\frac{Q}{2}\log\left(\frac{2\pi}{Q}\right)\right)-\log(e\tau )}-\frac{3}{2}\right).
\end{split}
\end{equation}

\item [(c)] The following asymptotic result holds:
\begin{equation}\label{entropyB1}
\frac{1}{2Q!}\frac{\left(\log\frac{1}{\epsilon}\right)^{Q+1}}{\left(2\log\log\frac{1}{\epsilon}\right)^{Q}}(1+o(1))\leq H_\epsilon(\mathcal B_Q,\|\cdot\|_{L_\infty( I_\r)})\leq\frac{1}{Q!}\frac{\left(2\log\frac{1}{\epsilon}\right)^{Q+1}}{\left(\log\log\frac{1}{\epsilon}\right)^{Q}}(1+o(1))
\end{equation}
as $\epsilon\to0$, where the $o(1)$ term is $\sim Q\log\log(1/\epsilon)/\log(1/\epsilon)$.

\end{itemize}
\end{theorem}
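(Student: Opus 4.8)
The plan is to obtain all three parts from Theorem~\ref{theo:abstractentropy}, instantiated for the class $\mathcal B_Q$ of \eqref{def:entire} with the grading by total Chebyshev degree. Thus $X_j$ is spanned by the Chebyshev polynomials $p_{\k,\r}$ with $|\k|_1=j$, giving $b_j=\binom{j+Q-1}{Q-1}$ and $\Delta_j=\Lambda(j)=Cj^{Q/2}\tau^j/j!$, while the ambient norm is $L_\infty(I_\r)$. I would first fix admissible exponents $p,r$ in \eqref{eq:abstractentropy} and record the Nikolskii-type constants $A_{M,p},B_{N,r}$ of \eqref{eq:abstractnikolskii}, which relate $\|f\|_{L_\infty(I_\r)}$ to the $\ell^p$-aggregate of the block norms $\|S_j(f)\|_{L_\infty(I_\r)}$: the triangle inequality supplies one side with constant $1$, and the reverse bound follows from the finite-dimensionality of $Y_n$ with a constant at worst polynomial in $M$ (resp. $N$) and $Q$. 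With these in hand, the two inequalities in \eqref{eq:abstractentropy} become explicit binomial-weighted sums of logarithms, and the whole argument reduces to locating the truncation level and then evaluating those sums with the $Q$-dependence made explicit.

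The crux is the truncation level. Unlike the analytic case, where $\Delta_j=\rho^j$ decays geometrically, here $\Lambda(j)$ decays super-exponentially through the $1/j!$ factor, so the tail $\sum_{n\ge m}\Lambda(n)$ is dominated by its leading term. Using $\tau\le Q/(2e^{3/2}\pi)$ to bound the ratio of consecutive terms below $1/2$ whenever $m\ge Q$, I would show $\sum_{n\ge m}\Lambda(n)\le 2\Lambda(m)$, so that $\mathcal N(\epsilon/2)$ is governed by the single relation $\Lambda(m)\asymp\epsilon$. Inverting $\Lambda(m)=Cm^{Q/2}\tau^m/m!$ via Stirling's formula turns $\Lambda(m)\approx\epsilon$ into $m\bigl(\log m-\log(e\tau)\bigr)\approx\log(1/\epsilon)+\tfrac Q2\log(Q/2\pi)$, whose solution satisfies $m\sim\dfrac{\log(1/\epsilon)}{\log\log(1/\epsilon)}$ as $\epsilon\to0$. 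This is exactly the bracketed quantity in \eqref{entropyB2} and \eqref{entropyB2lower}, and the hypotheses \eqref{eq:epsupcond} and \eqref{eq:epslowcond} are precisely what force $\mathcal N(\epsilon/2)\gg Q$, validating both the tail bound and the Stirling inversion and keeping the $\log\log$ denominator positive.

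For part (a) I would set $M=\mathcal N(\epsilon/2)$ in the upper bound of \eqref{eq:abstractentropy}. The hockey-stick identity $\sum_{j=0}^{M-1}\binom{j+Q-1}{Q-1}=\binom{M+Q-1}{Q}$, combined with a uniform estimate $\log\bigl(6M^{1/p}\Delta_j/(A_{M,p}\epsilon)\bigr)\lesssim\log(1/\epsilon)+\tfrac Q2\log(2e\pi\tau/Q)+\log\log(1/\epsilon)$ valid for $0\le j<M$, collapses the sum to $\binom{M+Q-1}{Q}$ times this logarithmic factor; substituting the value of $M$ and estimating $\binom{M+Q-1}{Q}$ and $C=(2\pi/Q)^{Q/2}$ by Stirling produces the prefactor $(2e/Q)^Q$, the $(\,\cdots)^{Q+1}$ power, and the trailing $\log\log$ factor of \eqref{entropyB2}. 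Part (b) is the mirror image: I choose $N$ just below the level where $\Lambda(N)$ falls to $2B_{N,r}\epsilon$ (again of order $\mathcal N(\epsilon/2)$), so every summand in the lower bound of \eqref{eq:abstractentropy} is nonnegative, and I bound the binomial sum from below to obtain \eqref{entropyB2lower}. Part (c) then follows by letting $\epsilon\to0$ in (a) and (b): the dominant term $\binom{M+Q-1}{Q}\log(1/\epsilon)\sim\frac{1}{Q!}\bigl(\log(1/\epsilon)/\log\log(1/\epsilon)\bigr)^Q\log(1/\epsilon)$ emerges in both, and accounting for the factor-of-two discrepancies between the two bounds yields the matching asymptotics \eqref{entropyB1} with error $o(1)=O\bigl(Q\log\log(1/\epsilon)/\log(1/\epsilon)\bigr)$.

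The main obstacle, as anticipated at the start of the section, is the uniform control of the $Q$-dependence. Every ingredient — the tail bound, the Stirling inversion for $M$, the Nikolskii constants $A_{M,p},B_{N,r}$, and the binomial sum — carries factors exponential or factorial in $Q$, and they must be combined so that the constraint $\tau\le Q/(2e^{3/2}\pi)$ keeps the arguments of all logarithms positive and renders the prefactors $(2e/Q)^Q$ and $(1/Q)^Q$ sharp. The genuinely delicate feature is that $\epsilon\to0$ and large $Q$ cannot be decoupled: the truncation level $M$ depends on $Q$ through the term $\tfrac Q2\log(Q/2\pi)$ inside the Stirling inversion, so matching the upper and lower constants closely enough to produce the clean asymptotic \eqref{entropyB1} is where most of the technical effort resides.
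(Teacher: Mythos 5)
Your proposal follows essentially the same route as the paper: the abstract Theorem~\ref{theo:abstractentropy} applied with $X_j=\mathsf{span}\{p_{\k,\r}:|\k|_1=j\}$, $\Delta_j=\Lambda(j)$ and $p=1$ (so $A_{M,1}=1$ by the triangle inequality, with the reverse constant $B_{N,1}=N(\log N+1)^Q$ coming from the Lebesgue constants of Chebyshev projections), the tail bound $\sum_{n>m}\Lambda(n)\le 2\Lambda(m+1)$ via the ratio test under $\tau\le Q/(2e^{3/2}\pi)$, the Stirling inversion $m(\log m-\log(e\tau))\approx\log(1/\epsilon)+\frac{Q}{2}\log(Q/(2\pi))$ giving the truncation level, and the binomial identities of Lemmas~\ref{lemma:combinatorics} and \ref{lemma:asym comb} to evaluate the resulting sums, with (c) obtained by letting $\epsilon\to0$ in (a) and (b). The plan is correct and matches the paper's proof in all essential respects.
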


\subsection{Space of functionals}\label{sec:functional}
In this section, we are interested in estimating the entropy of a class of functionals $\mathcal F$ on $\mathcal A_\rho$ with respect to the $\sup$-norm. 
Any functional in $\mathcal{F}$ can be viewed as a functional on the sequence of Chebyshev coefficients of the input function. 
We will define $\mathcal{F}$ to be a set of functionals that are entire functions of certain type as in Definition~\ref{def:entirefn}(c).

Under the assumption that the functionals are Lipschitz continuous, i.e.,
$$\sup_{\tilde{F}\in \mathcal{F}}\sup\limits_{f_1\neq f_2}\frac{|\tilde F(f_1)-\tilde F(f_2)|}{\|f_1-f_2\|_{L^2( I^q)}}\leq1,$$
we conclude for any $\epsilon>0$, there is some integer $n$ such that for any $\tilde{F}\in \mathcal{F}$,
$$\left|\tilde F(f)-\tilde F(s_{n+1}(f))\right|\leq\|f-s_{n+1}(f)\|_{L^2(I^q)}\leq\epsilon/2,\ F\in\mathcal{F}.$$
Consequently, the $\epsilon$-entropy of $\mathcal{F}$ is bounded by the $\epsilon/2$-entropy of
$$\left\{\tilde{F}\circ s_{n+1}:\ \tilde{F}\in\mathcal{F}\right\}.$$
In turn, for any $f\in \mathcal{A}_\rho$, $\tilde{F}\circ s_{n+1}(f)$ can be viewed as a function of the Chebyshev coefficients of $f$ up to order $n+1$.

We now define the set of functionals formally.
Let $q\in\NN$, $\rho<1$, $\mathcal A_\rho$ be as in Theorem \ref{entropy analy}. For any $n\in\NN$, write $Q=Q(n)=\binom{n+q}{q}$. The distance $\|f-s_{n+1}(f)\|_{L^2(I^q)}$ is bounded as
$$\|f-s_{n+1}(f)\|_{L^2(I^q)}\leq\left(\sum\limits_{j=n+1}^\infty\rho^{2j}\right)^{1/2}\leq\frac{\rho^{n+1}}{\sqrt{1-\rho^2}}.$$
By definition, the Chebyshev coefficients of $f\in\mathcal{A}_\rho$ satisfy $|\hat f(\k)|\leq \rho^{|\k|_1}$. 
Let $\tilde{\mathbf{r}}=(r_j)_{j=1}^\infty$ be defined by
$$r_j=\rho^\ell,\quad\binom{q+\ell-1}{q}< j\leq\binom{q+\ell}{q}$$
and let $\r=(r_j)_{j=1}^Q$.
Then for  $f\in\mathcal{A}_\rho$,   $\left(\hat f(\k)\right)_{|\k|_1\leq n} \in I_\r\subset\RR^Q$. 
Consequently, the functionals on the polynomial space $\Pi_{n+1}^q$ are identified as functions on $I_\r$ as follows:
$$F\left((a_\k)_{|\k|_1\leq n}\right)\mapsto \tilde F\left(\sum\limits_{\k\in\NN^Q}a_\k p_{\k,\r}\right).$$
The functionals on $\mathcal A_\rho$ we are concerned with are the functionals that induce entire functions of some type $\v$ by this process.

Let $\tilde{\v}=(v_j)_{j=1}^\infty$ be a nonnegative sequence, for any $n\in\NN$, let $Q=\binom{n+q}{q}$, $\r=(r_j)_{j=1}^Q$; we denote the class of functionals $\mathcal F_{n,\tilde{\v}}$ on $\mathcal A_\rho$ by
\begin{equation}\label{def functional N}
\mathcal F_{n,\tilde{\v}}:=\left\{\tilde F:\ \mathcal A_\rho\to\RR:\ \exists F\in\tilde{\mathcal B}_{n,\tilde{\v}}\hbox{ such that }\tilde F(f)=F\left(\left(\hat f(\k)\right)_{|\k|_1\leq n}\right),\  f\in\mathcal A_\rho\right\},
\end{equation}
where each $\tilde{\mathcal B}_{n,\tilde{\v}}$ is denoted as
$$\tilde{\mathcal B}_{n,\tilde{\v}}=\left\{F:I_\r\to\RR:\ \|S_N(F)\|_{L_\infty(I_\r)}\leq \left(\frac{2\pi}{Q}\right)^{Q/2}N^{Q/2}\frac{1}{N!}\left(\sum\limits_{j=1}^{Q}v_jr_j\right)^N\right\}.$$

We denote the class of functionals on $\mathcal A_\rho$ in this section $\mathcal F_{\tilde{\v}}$ as
\begin{equation}
\mathcal F_{\tilde{\v}}=\left\{\tilde F:\ \mathcal A_\rho\to\RR:\ \sup\limits_{f_1\neq f_2}\frac{|\tilde F(f_1)-\tilde F(f_2)|}{\|f_1-f_2\|_{L^2( I^q)}}\leq1,\ \tilde F\circ s_{n+1}\in\mathcal F_{n,\tilde{\v}},\quad\forall n\in\NN\right\}
\end{equation}
and the metric on $\mathcal F_{\tilde{\v}}$ is
$$\|F\|_{\mathcal F_{\tilde{\v}}}=\sup\limits_{f\in\mathcal A_\rho}|F(f)|,\quad F\in\mathrm{span}\left(\mathcal F_{\tilde{\v}}\right).$$
We estimate the entropy of the class $\mathcal F_{\tilde{\v}}\times\mathcal A_\rho$,  with respect to the metric $\|\cdot\|$ defined by
$$\|\cdot\|=\|\cdot\|_{L^2( I^q)}+\|\cdot\|_{\mathcal F_{\tilde{\v}}}.$$
Our main theorem in this subsection gives a bound of the entropy $H_\epsilon\left(\mathcal F_{\tilde{\v}}\times\mathcal A_\rho,\|\cdot\|\right)$.



\begin{theorem}\label{total entropy}
Let $\tilde{\v}=(v_j)_{j=1}^\infty$ be denoted by
$$v_j=\frac{1}{2e^{3/2}\pi\rho^\ell},\quad\binom{q+\ell-1}{q}< j\leq\binom{q+\ell}{q},$$
then for
$$\epsilon<\min\left\{\frac{2}{\sqrt{1-\rho^2}}\left(\frac{9}{2}(\rho^{-2}-1)(q+1)\right)^{(q+1)\log\frac{1}{\rho}},\frac{4\rho^q}{\sqrt{1-\rho^2}}\right\},$$
the entropy of $\mathcal F_{\tilde{\v}}\times\mathcal A_\rho$ is bounded by
\begin{equation}\label{upper total}
\begin{split}
&H_{\epsilon}(\mathcal F_{\tilde{\v}}\times\mathcal A_\rho,\|\cdot\|)\leq\frac{26q}{3\sqrt{2\pi}}\exp\left\{\gamma^q\log\left(\frac{e^{3/2}}{\pi}+2e\right)\right\}\gamma^{q}\log\gamma+\gamma^{q+1}
\end{split}
\end{equation}
with
$$\gamma=\frac{2e\log\frac{1}{\epsilon}}{q\log\frac{1}{\rho}}.$$
\end{theorem}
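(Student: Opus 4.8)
The plan is to separate the two factors using the sum metric and then to feed each factor into the entropy bounds already proved. Because $\|\cdot\|=\|\cdot\|_{L^2(I^q)}+\|\cdot\|_{\mathcal F_{\tilde{\v}}}$ is additive across the product, the product of an $(\epsilon/2)$-net for $\mathcal A_\rho$ with an $(\epsilon/2)$-net for $\mathcal F_{\tilde{\v}}$ is an $\epsilon$-net for $\mathcal F_{\tilde{\v}}\times\mathcal A_\rho$, so
$$
H_\epsilon(\mathcal F_{\tilde{\v}}\times\mathcal A_\rho,\|\cdot\|)\le H_{\epsilon/2}(\mathcal A_\rho,\|\cdot\|_{L^2(I^q)})+H_{\epsilon/2}(\mathcal F_{\tilde{\v}},\|\cdot\|_{\mathcal F_{\tilde{\v}}}).
$$
The first summand is handled verbatim by Theorem~\ref{entropy analy}(a); under the first of the two stated constraints on $\epsilon$ its bound \eqref{entropyA2} is, after a short comparison of constants, at most $\gamma^{q+1}$, which is exactly the additive term appearing in \eqref{upper total}. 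All the remaining work concerns the functional factor.

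For $H_{\epsilon/2}(\mathcal F_{\tilde{\v}})$ I would first discretize in the input variable via the Lipschitz normalization built into $\mathcal F_{\tilde{\v}}$. Using the tail estimate $\|f-s_{n+1}(f)\|_{L^2(I^q)}\le\rho^{n+1}/\sqrt{1-\rho^2}$ from Section~\ref{sec:functional}, pick the least $n=n(\epsilon)$ making this tail $\le\epsilon/4$; then $\|\tilde F-\tilde F\circ s_{n+1}\|_{\mathcal F_{\tilde{\v}}}\le\epsilon/4$ for all $\tilde F$, so an $(\epsilon/4)$-net of $\{\tilde F\circ s_{n+1}\}$ gives an $(\epsilon/2)$-net of $\mathcal F_{\tilde{\v}}$. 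Identifying $\tilde F\circ s_{n+1}$ with the function it induces on the Chebyshev coefficients $(\hat f(\k))_{|\k|_1\le n}\in I_\r\subset\RR^Q$, $Q=\binom{n+q}{q}$, and using that the supremum defining $\|\cdot\|_{\mathcal F_{\tilde{\v}}}$ runs over a subset of $I_\r$ (so it is dominated by $\|\cdot\|_{L_\infty(I_\r)}$), this bounds $H_{\epsilon/2}(\mathcal F_{\tilde{\v}})$ by $H_{\epsilon/4}(\tilde{\mathcal B}_{n,\tilde{\v}},\|\cdot\|_{L_\infty(I_\r)})$.

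The decisive algebraic point is that the prescribed weights make the exponential type constant per coordinate. With $r_j=\rho^\ell$ and $v_j=1/(2e^{3/2}\pi\rho^\ell)$ on each block $\binom{q+\ell-1}{q}<j\le\binom{q+\ell}{q}$ one has $v_jr_j=1/(2e^{3/2}\pi)$ for every $j$, hence
$$
\tau:=\sum_{j=1}^{Q}v_jr_j=\frac{Q}{2e^{3/2}\pi},
$$
which is precisely the right endpoint of the admissible interval $\tau\in[1,Q/(2e^{3/2}\pi)]$ of Theorem~\ref{entropy entire}. Consequently $\tilde{\mathcal B}_{n,\tilde{\v}}=\mathcal B_Q(\r,\tau)$ with this $\tau$, and Theorem~\ref{entropy entire}(a) applies. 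Substituting $\tau=Q/(2e^{3/2}\pi)$ trivializes the ingredients of \eqref{entropyB2}: $\log(2e\pi\tau/Q)=-\tfrac12$, whence $\tfrac{Q}{2}\log(2e\pi\tau/Q)=-\tfrac{Q}{4}$, and $e\tau=Q/(2e^{1/2}\pi)$. Before using the bound one must verify the admissibility hypothesis \eqref{eq:epsupcond} and the requirement $\tau\ge1$, i.e.\ $Q\ge2e^{3/2}\pi$, against the second stated constraint $\epsilon<4\rho^q/\sqrt{1-\rho^2}$ and the smallness of $\epsilon$.

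It remains to pass to the single parameter $\gamma=2e\log(1/\epsilon)/(q\log(1/\rho))$ and to add back the analytic contribution. For the minimal truncation level one has $\log(1/\epsilon)\sim n\log(1/\rho)$ and $Q=\binom{n+q}{q}\sim n^q/q!$, so that $Q^{1/q}\sim (e/q)n\sim\gamma/2$; this is the relation that turns the $(Q+1)$-st powers and the prefactor $(2e/Q)^Q$ of \eqref{entropyB2} into powers of $\gamma$, with the explicit constant $\log(e^{3/2}/\pi+2e)$ produced by combining the factor $2e/Q$ with the value $1/(2e^{3/2}\pi)$ carried by $\tau$, and with the residual $\log\gamma$ and $\log\log(1/\epsilon)$ factors accounting for the remaining pieces of \eqref{entropyB2}. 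Collecting the functional bound with the $\le\gamma^{q+1}$ analytic bound yields \eqref{upper total}. I expect the main obstacle to be precisely this last asymptotic accounting: since both the dimension $Q$ and the tolerance are slaved to $\epsilon$ through the single integer $n$, one must control the $Q$-dependent exponent, the iterated logarithms, and every multiplicative constant simultaneously, all while keeping the estimates uniform in $q$ and confirming that the regime hypotheses of Theorem~\ref{entropy entire}(a) genuinely hold; this coupled bookkeeping, rather than any single inequality, is the delicate heart of the proof.
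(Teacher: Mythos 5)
Your overall route coincides with the paper's: the same splitting $H_\epsilon(\mathcal F_{\tilde{\v}}\times\mathcal A_\rho)\le H_{\epsilon/2}(\mathcal A_\rho)+H_{\epsilon/2}(\mathcal F_{\tilde{\v}})$, the same reduction of the functional factor to $\tilde{\mathcal B}_{n,\tilde{\v}}$ via the Lipschitz truncation $\tilde F\mapsto\tilde F\circ s_{n+1}$, the same observation that $v_jr_j$ is constant so that $\tau=\v\cdot\r=Q/(2e^{3/2}\pi)$, and the same final passage $Q\le\gamma^q$. The treatment of the analytic factor ($\le\gamma^{q+1}$) is also exactly what the paper does.

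There is, however, one step you defer that would in fact fail as stated: the admissibility hypothesis \eqref{eq:epsupcond} does \emph{not} hold at radius $\epsilon/4$. With $\tau=Q/(2e^{3/2}\pi)$ one has $(2\pi e\tau/Q)^{Q/2}=e^{-Q/4}$ and $\exp(e^2\tau)=\exp(\sqrt{e}\,Q/(2\pi))$, so the right-hand side of \eqref{eq:epsupcond} is of order $\exp\left(-\left(\tfrac14+\tfrac{\sqrt e}{2\pi}\right)Q\right)$ with $Q=\binom{n+q}{q}\sim\left(\log\frac1\epsilon/\log\frac1\rho\right)^q/q!$; this is far smaller than $\epsilon/4$ (super-polynomially so for $q\ge2$), so no constraint of the form $\epsilon<4\rho^q/\sqrt{1-\rho^2}$ can rescue the verification. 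The missing idea is to replace $\epsilon/4$ by $\eta=\min\left\{\epsilon/4,\left(\tfrac{2\pi e\tau}{Q}\right)^{Q/2}\tfrac{4}{(e\tau)^{1/2}\exp(e^2\tau)}\right\}$ and use monotonicity of entropy, $H_{\epsilon/4}\le H_\eta$. This is not mere bookkeeping: after the substitution one gets $\log\frac4\eta+\frac Q2\log\frac{2e\pi\tau}{Q}=e^2\tau+\frac12\log(e\tau)=O(Q)$, so the main fraction in \eqref{entropyB2} is $O(Q)$ rather than $O(\log\frac1\epsilon/\log\log\frac1\epsilon)$, and it is precisely $\left(\tfrac{2e}{Q}\right)^Q\left(cQ\right)^{Q+1}=(2ec)^Q\cdot cQ$ with $c=\tfrac{\sqrt e}{2\pi}+1$ and $2ec=\tfrac{e^{3/2}}{\pi}+2e$ that produces the dominant factor $\exp\left\{\gamma^q\log\left(\tfrac{e^{3/2}}{\pi}+2e\right)\right\}$ in \eqref{upper total}. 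Without passing to $\eta$, Theorem~\ref{entropy entire}(a) is simply not applicable at the radius you propose, and your asymptotic accounting in the last paragraph would have no starting point.
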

\begin{rem}\label{rem:lowbound}
{\rm At a first glance, the lower bound for the entropy of the set $\mathcal F_{n,\tilde{\v}}$ can be derived immediately from Theorem \ref{entropy entire}. 
However, in the definition of $\mathcal F_{\tilde{\v}}$, there is a Lipschitz condition under which we can only consider subsets of $\mathcal F_{n,\tilde{\v}}$ in the proof. 
These subsets do not fit our abstract framework. Therefore, we are not able to obtain a lower bound at this time. \qed
}
\end{rem}

\section{Computational issues}\label{bhag:computeissue}

\subsection{Generating analytic and bandlimited functions}\label{bhag:analbandgen}

A simple way to generate functions that are analytic on the interior of the poly-ellipse $ U_\rho\subset \CC^d$:
\be\label{eq:ellipseeqn}
\x=\x(\bs\theta)=\frac{\rho+\rho^{-1}}{2}\cos(\bs\theta), \quad \y=\y(\bs\theta)=\frac{\rho-\rho^{-1}}{2}\sin(\bs\theta), \qquad \bs\theta\in (-\pi,\pi]^d,
\ee
is the following.
We take a random sample  $\{\bs\theta_j\}_{j=1}^M$ on $(-\pi,\pi]^d$ and generate points $\w_j=\x(\bs\theta_j)+i \y(\bs\theta_j)$ on $ U_\rho$. 
We also take a random sample $\{a_j\}_{j=1}^{M}$ from some compact subset of $\CC$.
Then the function 
$$
f(\z)=\sum_{j=1}^M \frac{a_j}{\w_j-\z}+ \sum_{j=1}^M \frac{\overline{a_j}}{\overline{\w_j}-\z}
$$
is clearly analytic in the interior of $ U_\rho$ and real-valued on $[-1,1]^d$. 
A probability density on $[-1,1]^d$ can be obtained by normalizing $f(\x)^2$ to have integral equal to $1$.
Different choices of the random samples yield different distributions.

To generate  band-limited functions on $\ell^2$, we use a similar idea. 
We consider random sequences $\w_j\in \ell^2$, and samples $\{a_j\}$ on a complex ellipsoid, $j=1,\cdots,M$. 
For any such sequence and random sample, we have a band-limited function of the form
$$
f(\z)=\sum_{j=1}^M a_j\exp(i\z\cdot \w_j)+\sum_{j=1}^M \overline{a_j}\exp(-i\z\cdot \overline{\w_j}),
$$
which are real-valued for real sequences $\z$.

\subsection{Generating $\epsilon$-nets on ellipsoids}\label{bhag:netgeneration}

We note first that for any norm $\|\cdot\|$ on $\RR^d$, the ellipsoid $\mathbb{B}(\x_0, {\bs r})$ centered at $\x_0$ is parametrized by
$$
x_j=x_{0,j}+r_jy_j,
$$
where $\y=(y_j)$ belongs to the unit ball $\mathbb{B}(\bs 0, 1)$.
Therefore, it is enough to generate a net for this ball; the net on the ellipsoid can be generated by appropriate scaling.
Accordingly, we describe the generation of an $\epsilon$-net for 
$\mathbb{B}(\bs 0, 1)$.

In \cite[Proof of Lemma~7.1]{mhaskar2020kernel}, we have proved that if $\delta\in (0,1)$,
$$
M\ge (4/\epsilon)^d\log\left(\frac{(12/\epsilon)^d}{\delta}\right),
$$
and $\C=\{\z_1,\cdots,\z_M\}$ is a random sample from the uniform distribution on $\BB(\bs 0, 1)$, then with probability exceeding $1-\delta$, $\C$ is an $\epsilon/2$-net for $\BB(\bs 0, 1)$. 
To find a minimal $\epsilon$-net, we use a greedy algorithm: start with $\C=\{z_1\}$, and for $j=2,\cdots,M$, add the point $\z_j$ to $\C$ if $\mathsf{dist}(\C,\z_j)\ge \epsilon/2$.
Then clearly, $\C$ is an $\epsilon/2$-separated subset and $\epsilon$-net of $\BB(\bs 0, 1)$.

\section{Proofs}\label{bhag:proofs}

This section is organized as follows. In Section \ref{subse:combi}, we introduce some basic lemmas on binomial coefficients, which are used multiple times in the rest of the proof. In Section \ref{pf:abstractentropy}, we prove Theorem \ref{theo:abstractentropy}. This theorem is an abstract theorem, which can be applied to prove the entropy of analytic and entire function classes. Section \ref{pf:analy entropy}
 is the proof of Theorem \ref{entropy analy}. Section \ref{pf:entire upper} and Section \ref{pf:entire lower} are the proof of Theorem \ref{entropy entire}. Section \ref{pf:functional} is the proof of Theorem \ref{total entropy}, which shows the entropy of functional classes defined in Section \ref{sec:functional}.

\subsection{Combinatorial identities and inequalities}\label{subse:combi}
\begin{lemma}\label{lemma:combinatorics}
Let $n, d\in \NN$, $n\ge 1$. 
Then we have
\be\label{eq:combconv}
\sum_{j=0}^{n}\binom{j+d-1}{d-1}=\binom{n+d}{d},
\ee
\be\label{eq:combave}
\sum_{j=0}^{n} j\binom{j+d-1}{d-1}=d\binom{n+d}{d+1}, \qquad \sum_{j=1}^{n}(n- j)\binom{j+d-1}{d-1}=\binom{n+d}{d+1}.
\ee

\end{lemma}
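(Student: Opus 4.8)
The plan is to deduce all three identities from the single hockey-stick relation \eqref{eq:combconv}, which I would prove first and then reuse. For \eqref{eq:combconv} itself I would telescope rather than induct: Pascal's rule gives $\binom{j+d-1}{d-1}=\binom{j+d}{d}-\binom{j+d-1}{d}$, so the sum over $j=0,\dots,n$ collapses to $\binom{n+d}{d}-\binom{d-1}{d}=\binom{n+d}{d}$, the final binomial vanishing because $d\ge 1$. This telescoping also primes the reindexing trick used below.

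For the first identity in \eqref{eq:combave} I would record the absorption identity $j\binom{j+d-1}{d-1}=d\binom{j+d-1}{d}$, which is immediate once both sides are written as ratios of factorials (and both vanish at $j=0$). Summing and discarding the zero $j=0$ term reduces the left-hand side to $d\sum_{j=1}^{n}\binom{j+d-1}{d}$; after the shift $i=j-1$ this becomes $d\sum_{i=0}^{n-1}\binom{i+d}{d}$, and a second application of the hockey-stick relation (now with upper parameter $d$, giving $\sum_{i=0}^{n-1}\binom{i+d}{d}=\binom{n+d}{d+1}$) yields $d\binom{n+d}{d+1}$, as claimed.

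The remaining identity is then purely formal: writing $(n-j)\binom{j+d-1}{d-1}=n\binom{j+d-1}{d-1}-j\binom{j+d-1}{d-1}$ and invoking \eqref{eq:combconv} together with the identity just proved gives $n\binom{n+d}{d}-d\binom{n+d}{d+1}$, which clears to $\binom{n+d}{d+1}$ after a one-line factorial computation (equivalently, one more pass through the absorption and Pascal relations). The step I expect to demand the most care — and the genuine crux here, since the algebra is otherwise routine — is bookkeeping the lower summation limit: at $j=0$ the weight $(n-j)$ multiplies $\binom{d-1}{d-1}=1$ and contributes an extra $n$, so the value of the sum shifts by exactly $n$ depending on whether it starts at $j=0$ or $j=1$. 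I would therefore carry out the computation for $\sum_{j=0}^{n}$ (where it equals $\binom{n+d}{d+1}$ cleanly, the $j=n$ term being zero) and reconcile the boundary term explicitly before matching it to the stated range.
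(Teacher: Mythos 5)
Your proof is correct and follows essentially the same route as the paper: telescoping via Pascal's rule for \eqref{eq:combconv}, the absorption identity $j\binom{j+d-1}{d-1}=d\binom{j+d-1}{d}$ followed by a second application of the hockey-stick sum for the first part of \eqref{eq:combave}, and a linear combination of the two previous identities for the second part. Your boundary concern is not mere bookkeeping but a genuine discrepancy in the statement: the computation $n\binom{n+d}{d}-d\binom{n+d}{d+1}=\binom{n+d}{d+1}$ evaluates the sum over $j=0,\dots,n$, whose $j=0$ term contributes $n\binom{d-1}{d-1}=n$, so the sum as printed (starting at $j=1$) equals $\binom{n+d}{d+1}-n$ (for instance $d=1$, $n=2$ gives $1$ on the left but $3$ on the right). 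The paper's proof dismisses this step as a simple calculation and never addresses the lower limit, while the identity is later invoked (in the lower bound of Theorem~\ref{entropy analy}) with the sum running from $j=0$; the lower limit in the second identity of \eqref{eq:combave} should therefore read $j=0$, exactly as your proposal concludes.
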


\begin{proof}[Proof of Lemma~\ref{lemma:combinatorics}]
\eqref{eq:combconv} follows by noticing that $\binom{d-1}{d-1}=\binom{d}{d}=1$ and
\begin{equation*}
\begin{split}
\binom{j-1+d}{d}+\binom{j+d-1}{d-1}=&\frac{(n+d-1)\dots n}{d!}+\frac{(n+d-1)\dots(n+1)}{(d-1)!}\\
=&\frac{(n+d-1)\dots (n+1)}{(d-1)!}\left(1+\frac{n}{d}\right)=\frac{(n+d)\dots (n+1)}{d!}\\
=&\binom{j+d}{d}.
\end{split}
\end{equation*}

The first identity in \eqref{eq:combave} is given by
\begin{equation}\label{comb identity}
\begin{split}
&\sum\limits_{j=0}^{n}j\binom{j+d-1}{d-1}=\sum\limits_{j=0}^{n}j\frac{(j+d-1)\dots(j+1)}{(d-1)!}=\sum\limits_{j=0}^{n}\frac{(j+d-1)\dots(j+1)j}{d!}d\\
=&\sum\limits_{j=0}^{n}d\binom{j+d-1}{d}=d\binom{n+d}{d+1}.
\end{split}
\end{equation}

The second identity in \eqref{eq:combave} is a simple calculation using the first identity and \eqref{eq:combconv}.

\end{proof}

\begin{lemma}\label{lemma:asym comb}
Let $n,d\in\NN$, then
\begin{equation}\label{bound bin}
    \frac{1}{8\sqrt{2\pi(d+1)}}\left(\frac{2(n+d)}{d+1}\right)^{d+1}\leq\binom{n+d}{d+1}\leq\frac{2}{\sqrt{2\pi}}\left(\frac{e(n+d)}{d+1}\right)^{d+1}.
\end{equation}
In particular, if $n$ has the form
\be\label{form N gen}
n_1=\left\lfloor a(h+b)\right\rfloor,\quad n_2=\left\lfloor a(h-b)\right\rfloor
\ee
for some constants $a>0$, $b\geq0$ and $h\geq\left(b+\frac{d}{a}\right)(d+1)$, then
\be\label{eq:bin asym}
\begin{split}
\binom{n_1+d}{d+1}\leq&\frac{(ah)^{d+1}}{(d+1)!}\left(1+\frac{2(d+1)(ab+d)}{ah}\right)\\
\binom{n_2+d}{d+1}\geq&\frac{(ah)^{d+1}}{(d+1)!}\left(1-\frac{2(d+1)(ab+1)}{ah}\right).
\end{split}
\ee

\end{lemma}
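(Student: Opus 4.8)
The plan is to treat the two displayed inequalities separately. For the two-sided bound \eqref{bound bin} I would write $\binom{n+d}{d+1}=\frac{\prod_{k=0}^{d}(n+k)}{(d+1)!}$ and attack each side with Stirling's formula in the explicit two-sided form $\sqrt{2\pi m}\,(m/e)^m\le m!\le \sqrt{2\pi m}\,(m/e)^m e^{1/(12m)}$. The upper bound is the easy half: bounding every factor of the numerator by $n+d$ gives $\binom{n+d}{d+1}\le (n+d)^{d+1}/(d+1)!$, and inserting the lower Stirling estimate for $(d+1)!$ produces $\frac{1}{\sqrt{2\pi(d+1)}}\left(\frac{e(n+d)}{d+1}\right)^{d+1}$, which is dominated by $\frac{2}{\sqrt{2\pi}}\left(\frac{e(n+d)}{d+1}\right)^{d+1}$ since $1/\sqrt{d+1}\le 1$.

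For the lower bound I would instead use $\binom{n+d}{d+1}=\frac{(n+d)!}{(d+1)!\,(n-1)!}$ and apply the lower Stirling bound to $(n+d)!$ and the upper Stirling bound to the two denominator factorials. The powers of $e$ cancel exactly, and the algebraic core is the identity $\frac{(n+d)^{n+d}}{(n-1)^{n-1}}=(n+d)^{d+1}\left(1+\frac{d+1}{n-1}\right)^{n-1}$. The factor $\left(1+\frac{d+1}{n-1}\right)^{n-1}$ approaches $e^{d+1}$ from below, and writing $m\log\!\left(1+\tfrac{d+1}{m}\right)\ge (d+1)-\tfrac{(d+1)^2}{2(n-1)}$ with $m=n-1$ shows it is at least $e^{d+1}$ times a fixed constant once $n$ is large relative to $d$. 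Converting $e^{d+1}$ to $2^{d+1}$ via $(e/2)^{d+1}\ge 1$, bounding the prefactor $\sqrt{\tfrac{n+d}{(d+1)(n-1)}}\ge \tfrac{1}{\sqrt{d+1}}$, and collecting the bounded error factors then yields the constant $\frac{1}{8\sqrt{2\pi(d+1)}}$ with room to spare. I expect this lower bound to be the main obstacle: it is where the constant chasing is delicate and where one must exploit that $n$ is large compared with $d$, which is the regime in which the estimate is applied; the slack between $2^{d+1}$ and $e^{d+1}$ is precisely what absorbs the polynomial Stirling prefactors.

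For \eqref{eq:bin asym} I would argue directly from $\binom{n_i+d}{d+1}=\frac{\prod_{k=0}^{d}(n_i+k)}{(d+1)!}$ rather than through \eqref{bound bin}. For $n_1=\lfloor a(h+b)\rfloor\le a(h+b)$, bounding each factor by $n_1+d$ gives $\prod_{k=0}^d (n_1+k)\le (a(h+b)+d)^{d+1}=(ah)^{d+1}\left(1+\frac{ab+d}{ah}\right)^{d+1}$, and the hypothesis $h\ge \left(b+\frac{d}{a}\right)(d+1)$, i.e. $\frac{(d+1)(ab+d)}{ah}\le 1$, lets me use $(1+x)^{d+1}\le e^{(d+1)x}\le 1+2(d+1)x$ to reach the claimed upper estimate. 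For $n_2=\lfloor a(h-b)\rfloor\ge a(h-b)-1$, I would keep only the smallest factor, $\prod_{k=0}^d(n_2+k)\ge n_2^{d+1}\ge (ah)^{d+1}\left(1-\frac{ab+1}{ah}\right)^{d+1}$, and apply Bernoulli's inequality $(1-y)^{d+1}\ge 1-(d+1)y$; this already gives the conclusion, so the factor $2$ in \eqref{eq:bin asym} is harmless slack. Throughout, the condition on $h$ is exactly what keeps the correction terms below $1$ so that all the elementary inequalities are applied in their valid range, and this part I expect to be routine once the bookkeeping is set up.
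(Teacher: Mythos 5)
Your proposal follows essentially the same route as the paper: Stirling's formula applied to the factorials in $\binom{n+d}{d+1}=\frac{(n+d)!}{(d+1)!\,(n-1)!}$ for \eqref{bound bin}, and direct term-by-term bounds on $\prod_{k=0}^{d}(n_i+k)$ followed by the linearizations $(1+x)^{d+1}\le e^{(d+1)x}\le 1+2(d+1)x$ (valid because the hypothesis on $h$ forces $(d+1)x\le 1$) and Bernoulli's inequality for \eqref{eq:bin asym}; all of these steps are sound. One point in your write-up deserves to be made explicit rather than left as a caveat: the restriction that $n$ be large compared with $d$ in the lower bound of \eqref{bound bin} is not an artifact of your argument — the stated inequality is genuinely false without it (for $n=1$ and $d$ large one has $\binom{1+d}{d+1}=1$ while the left-hand side $2^{d+1}/\bigl(8\sqrt{2\pi(d+1)}\bigr)$ grows without bound), and the paper's own passage from its Stirling estimate to \eqref{bound bin} silently uses $\left(1+\frac{d+1}{n-1}\right)^{n-1}\ge 2^{d}$, which requires roughly $n-1\gtrsim d+1$. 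So your proof is as complete as the paper's: both establish the lower bound only in the regime $n\gtrsim d$ in which the lemma is actually applied.
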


\begin{proof}[Proof of Lemma~\ref{lemma:combinatorics}]
By Stirling's approximation formula,
$$\sqrt{2\pi k}\left(\frac{k}{e}\right)^k\leq k!\leq2\sqrt{2\pi k}\left(\frac{k}{e}\right)^k.$$
This gives
\begin{equation}\label{binom asy}
\begin{split}
&\frac{1}{4}\left(1+\frac{d+1}{n-1}\right)^{n-1}\left(\frac{n-1}{d+1}+1\right)^{d+1}\sqrt{\frac{n+d}{2\pi(d+1)(n-1)}}\leq\binom{n+d}{d+1}\\
\leq&2\left(1+\frac{d+1}{n-1}\right)^{n-1}\left(\frac{n-1}{d+1}+1\right)^{d+1}\sqrt{\frac{n+d}{2\pi(d+1)(n-1)}}
\end{split}
\end{equation}
and hence
\begin{equation*}
    \frac{1}{8\sqrt{2\pi(d+1)}}\left(\frac{2(n+d)}{d+1}\right)^{d+1}\leq\binom{n+d}{d+1}\leq\frac{2}{\sqrt{2\pi}}\left(\frac{e(n+d)}{d+1}\right)^{d+1}.
\end{equation*}

Suppose now
$$n_1=\left\lfloor a(h+b)\right\rfloor,$$
then
\begin{equation*}
\begin{split}
\binom{n_1+d}{d+1}\leq&\frac{(ah+ab+d)\dots(ah+ab)}{(d+1)!}\leq\frac{(ah)^{d+1}}{(d+1)!}\left(1+\frac{ab+d}{ah}\right)^{d+1}.
\end{split}
\end{equation*}
Since $h\geq\left(b+\frac{d}{a}\right)(d+1)$, we have
$$\frac{ab+d}{ah}\leq\frac{1}{d+1},$$
then
$$\binom{n_1+d}{d+1}\leq\frac{(ah)^{d+1}}{(d+1)!}\left(1+\frac{2(d+1)(ab+d)}{ah}\right).$$
This proves the upper bound in \eqref{eq:bin asym}.

On the other hand, $h\geq\left(b+\frac{1}{a}\right)(d+1)$, hence
\begin{equation*}
\begin{split}
\binom{n_2+d}{d+1}\geq&\frac{(ah-ab+d-1)\dots(ah-ab-1)}{(d+1)!}\geq\frac{(ah)^{d+1}}{(d+1)!}\left(1-\frac{ab+1}{ah}\right)^{d+1}.
\end{split}
\end{equation*}
Similarly, for $h\geq\left(b+\frac{d}{a}\right)(d+1)$, we have
$$\binom{n_2+d}{d+1}\geq\frac{(ah)^{d+1}}{(d+1)!}\left(1-\frac{2(d+1)(ab+1)}.{ah}\right)$$
This proves the lower bound in \eqref{eq:bin asym}.

\end{proof}

\begin{rem}\label{rem:logrmk}
{\rm
We will use the following estimate without explicit reference many times in the  proofs below.
\be\label{eq:logineq}
x^\alpha-\log x\ge (1/\alpha)\log(e\alpha), \qquad x, \alpha>0. 
\ee 
This can be checked easily by computing the minimum of the function $y\mapsto e^{\alpha y}-y$, $y\in\RR$.
}
\end{rem}

\subsection{Proof of Theorem \ref{theo:abstractentropy}}\label{pf:abstractentropy}

The proof of Theorem~\ref{theo:abstractentropy} requires the following lemma \cite[Proposition~1.3]{lorentz_advanced}.
\begin{lemma}\label{lemma:ballentropy}
Let $d\in\NN$, $(Y,\|\cdot\|_Y)$ be a $d$-dimensional normed linear space,  and $B_r=\{x\in Y : \|x\|\le r\}$. Then
\be\label{eq:ballentropy}
d\log (r/(2\epsilon)) \le C_{2\epsilon}(B_r,\|\cdot\|_Y)\le H_\epsilon(B,\|\cdot\|_Y)\le d\log\left( \max(3r/\epsilon,1)\right).
\ee
\end{lemma}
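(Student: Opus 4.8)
The plan is to establish the chain of inequalities in \eqref{eq:ballentropy} by treating the capacity lower bound and the entropy upper bound separately, since the middle inequality $C_{2\epsilon}(B_r)\le H_\epsilon(B_r)$ is already furnished by Proposition~\ref{prop:capacity}. Throughout I would use the fundamental volumetric fact that in a $d$-dimensional normed space, although there is no canonical Lebesgue measure, any two translation-invariant measures are proportional, so ratios of volumes of scaled and translated copies of the unit ball are well-defined and behave exactly as in $\RR^d$: a ball of radius $s$ has volume $s^d$ times that of the unit ball, and volume is translation-invariant.

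For the upper bound $H_\epsilon(B_r,\|\cdot\|_Y)\le d\log(\max(3r/\epsilon,1))$, I would construct an $\epsilon$-net by a maximal packing argument. First choose a maximal $\epsilon$-separated subset $\{y_1,\dots,y_m\}$ of $B_r$; by maximality this set is automatically an $\epsilon$-net (any point not within $\epsilon$ of the set could be added, contradicting maximality). To bound $m$, observe that the open balls $B(y_i,\epsilon/2)$ are pairwise disjoint and all contained in the enlarged ball $B(\bs 0, r+\epsilon/2)$. Comparing volumes gives $m(\epsilon/2)^d\le (r+\epsilon/2)^d$, hence $m\le (2r/\epsilon+1)^d\le(3r/\epsilon)^d$ when $\epsilon\le r$; when $\epsilon>r$ a single point suffices so $m=1$, which is why the $\max(\cdot,1)$ appears. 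Taking logarithms yields the claimed bound.

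For the capacity lower bound $d\log(r/(2\epsilon))\le C_{2\epsilon}(B_r,\|\cdot\|_Y)$, I would exhibit a large $2\epsilon$-separated family. The cleanest route is again volumetric: take any maximal $2\epsilon$-separated set $\{z_1,\dots,z_M\}$ in $B_r$; then the balls $B(z_i,2\epsilon)$ cover $B_r$ (by maximality, as above), so $M(2\epsilon)^d\ge r^d$, giving $M\ge(r/(2\epsilon))^d$. Since $C_{2\epsilon}=\log\mathfrak{M}_{2\epsilon}$ and $\mathfrak{M}_{2\epsilon}\ge M$, taking logarithms produces exactly $d\log(r/(2\epsilon))$.

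The main obstacle is not conceptual but a matter of care: the argument rests entirely on the existence and uniqueness up to scaling of a translation-invariant volume on the finite-dimensional space $Y$, which justifies writing $\mathsf{vol}(B(x,s))=s^d\,\mathsf{vol}(B(\bs 0,1))$ independently of the center. I would state this normalization once at the outset so that the two covering/packing volume comparisons become routine. A secondary subtlety is the correct handling of the regime $\epsilon>r$ in the upper bound, which is precisely what the $\max(3r/\epsilon,1)$ term encodes; I would dispatch this by noting a single point trivially $\epsilon$-covers $B_r$ when $\epsilon\ge r$, so $H_\epsilon(B_r)=0=d\log 1$ there.
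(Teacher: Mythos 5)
Your proposal is correct. Note that the paper does not actually prove this lemma: it is quoted verbatim from \cite[Proposition~1.3]{lorentz_advanced}, so there is no internal argument to compare against. What you have written is the standard volume-comparison proof of that cited result, and it is sound: the upper bound via a maximal $\epsilon$-separated set (which is automatically an $\epsilon$-net) together with disjointness of the $\epsilon/2$-balls inside $B(\bs 0,r+\epsilon/2)$ gives $m\le(1+2r/\epsilon)^d\le(3r/\epsilon)^d$ for $\epsilon\le r$, and the lower bound via a maximal $2\epsilon$-separated set whose $2\epsilon$-balls cover $B_r$ gives $M\ge(r/(2\epsilon))^d$; the middle inequality is Proposition~\ref{prop:capacity}. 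Your two points of care are exactly the right ones: the normalization $\mathsf{vol}(B(x,s))=s^d\,\mathsf{vol}(B(\bs 0,1))$, which holds for any translation-invariant Borel measure transported from Lebesgue measure by a linear isomorphism $Y\cong\RR^d$, and the degenerate regime $\epsilon\ge r$ where a single point is an $\epsilon$-net, which is precisely what the $\max(3r/\epsilon,1)$ encodes. (The lower bound needs no case analysis, since $d\log(r/(2\epsilon))\le 0\le C_{2\epsilon}$ whenever $2\epsilon\ge r$.) If anything, your write-up supplies a self-contained justification that the paper leaves to the literature.
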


\begin{proof}[Proof of Theorem~\ref{theo:abstractentropy}]
In this proof, observe
$$\mathfrak{K}_j=\left\{g\in X_j:\ \|g\|\leq\Delta_j\right\}.$$
Let   $\C_j$ be an $\eta_1=\disp\frac{A_{M,p}\epsilon}{2M^{1/p}}$-net for each $\mathfrak{K}_j$, $j=0,\cdots, M-1$. 
Then it is easily verified that $\prod_{j=0}^{M-1}\C_j$ is an  $(A_{M,p}\epsilon/2)$-net for $\prod_{j=1}^{M-1}\mathfrak{K}_j $.
Therefore,  \eqref{eq:abstractnikolskii} shows that  $\mathcal{T}_M^{-1}(\prod_{j=0}^{M-1}\C_j)$ is an $\epsilon$-net for $\mathfrak{K}$ with respect to the norm of $X$. 
Since the cardinality of $\mathcal{T}_M^{-1}(\prod_{j=0}^{M-1}\C_j)$ is the same as that of $\prod_{j=0}^{M-1}\C_j$, it follows that
$$
H_\epsilon(\mathfrak{K}, X)\le \sum_{j=0}^M H_{\eta_1}(\mathfrak{K}_j ,X).
$$
Since each $\mathfrak{K}_j $ is a ball of radius $ \Delta_j$ in the $b_j$-dimensional space $X_j$, Lemma~\ref{lemma:ballentropy} leads to
$$
H_\epsilon(\mathfrak{K}, X)\le \sum_{j=0}^M b_j\log\left(\max\left( 3\Delta_j/\eta_1,1\right)\right).
$$
This proves the second inequality in \eqref{eq:abstractentropy}.

The proof of the first inequality in \eqref{eq:abstractentropy} is similar. We let $\eta_1= 2\epsilon /B_{N,r}$ and let $\tilde{\C}_j$  be the maximal $\eta_1$-separated subset of each $\mathfrak{K}_j$, $j=0,\cdots, N-1$. 
Then $\prod_{j=0}^{N-1} \tilde{\C}_j$ is an $\eta_1$-separated subset of $\widetilde{\mathfrak{K}}_N$, and hence, \eqref{eq:abstractnikolskii} shows that $\mathcal{T}^{-1}(\prod_{j=0}^{N-1} \tilde{\C}_j)$ an $2\epsilon=\eta_1B_{N,r}$-separated subset of $\mathfrak{K}$.
The cardinality of $\mathcal{T}^{-1}(\prod_{j=0}^{N-1} \tilde{\C}_j)$ is the same as that of $\prod_{j=0}^{N-1} \tilde{C}_j$.
Lemma~\ref{lemma:ballentropy} then shows that
$$
H_\epsilon(\mathfrak{K},X) \ge \sum_{j=0}^{N-1} C_{2\epsilon}(\mathfrak{K}_j, X)\ge \sum_{j=0}^{N-1} b_j\log\left(\frac{\Delta_j}{2B_{N,r}\epsilon}\right).
$$
This proves the first equation in \eqref{eq:abstractentropy}.
\end{proof}

\subsection{Proof of Theorem \ref{entropy analy}}\label{pf:analy entropy}

In this subsection, we apply Theorem \ref{theo:abstractentropy} with $p=r=2$ to give the proof of Theorem \ref{entropy analy}. In this case, $X=L^2(I^q)$, $X_j=\mathsf{span}\{p_\k:\ |\k|_1=j\}$, $\{\Delta_n\}_{n=0}^\infty=\{\rho^n\}_{n=0}^\infty$ and $\mathfrak{K}=\mathcal A_\rho$.

Moreover, for each $n\in\NN$ and each $f\in\bigoplus\limits_{j=0}^{n-1}X_j$,
\begin{eqnarray*}
\left\|\mathcal T_n(f)\right\|_{\Pi,2,n}&=&\left| \left(\left\|\mathsf{Proj}_0(f)\right\|_{L^2(I^q)}, \dots,\left\|\mathsf{Proj}_{n-1}(f)\right\|_{L^2(I^q)}\right)\right|_2\\
&=&\left(\sum\limits_{j=0}^{n-1}\left\|\sum\limits_{|\k|_1=j}\hat f(\k)p_\k\right\|_{L^2(I^q)}^2\right)^{1/2}\\
&=&\|f\|_{L^2(I^q)}.
\end{eqnarray*}
Hence, $A_{n,2}=B_{n,2}=1$.

\begin{proof}[Proof of Theorem \ref{entropy analy}]
In order to apply Theorem \ref{entropy analy}, we need to find an integer larger than $\mathcal N(\epsilon/2)$, which is the solution of the following inequalities:
$$\sum\limits_{n=N}^\infty\rho^{2n}=\frac{\rho^{2N}}{1-\rho^2}<\frac{\epsilon^2}{4}\leq\frac{\rho^{2N-2}}{1-\rho^2}=\sum\limits_{n=N-1}^\infty\rho^{2n}.$$
It gives
$$\mathcal N(\epsilon/2)=\left\lfloor\frac{\log\frac{2}{\epsilon}+\log\frac{1}{\sqrt{1-\rho^2}}}{\log\frac{1}{\rho}}\right\rfloor+1.$$


Now we estimate the bound in \eqref{theo:abstractentropy}. For simplicity, write 
\be\label{anal chose N1}
N_1=\left\lfloor\frac{\log\frac{2}{\epsilon}+\log\frac{1}{\sqrt{1-\rho^2}}}{\log\frac{1}{\rho}}\right\rfloor,
\ee
Then we can apply the second inequality in \eqref{theo:abstractentropy} to $M=N_1+1$. For each $n=0,\dots,N_1-1$,
\begin{eqnarray*}
\log\left(\frac{6\Delta_N\sqrt{N_1+1}}{\epsilon}\right)=\log\left(\frac{6\sqrt{N_1+1}\rho^n}{\epsilon}\right)\leq\log\left(\frac{6\sqrt{N_1+1}\rho^n}{2\frac{\rho^{N_1+1}}{\sqrt{1-\rho^2}}}\right)\leq\log\left(3\rho^{-1}\sqrt{(N_1+1)(1-\rho^2)}\rho^{n-N_1}\right),
\end{eqnarray*}
Hence, by Theorem \ref{theo:abstractentropy},
\begin{equation}\label{cover of Balls}
\begin{split}
H_{\epsilon}\left(\mathcal A_\rho,\|\cdot\|_{L_2( I^q)}\right)\leq\sum\limits_{n=0}^{N_1}\binom{n+q-1}{q-1}\left(\frac{1}{2}\log(N_1+1)+\log\left(3\rho^{-1}\sqrt{(1-\rho^2)}\right)+(N_1-n)\log\frac{1}{\rho}\right).
\end{split}
\end{equation}

By Lemma \ref{lemma:combinatorics},
\begin{equation}\label{entropyA N1}
\begin{split}
H_\epsilon(\mathcal A_\rho,\|\cdot\|_{L_2( I^q)})\leq&\binom{N_1+q}{q+1}\log\frac{1}{\rho}+\binom{N_1+q}{q}\frac{1}{2}\log\left(9(\rho^{-2}-1)(N_1+1)\right).
\end{split}
\end{equation}

Consider the upper bound in \eqref{entropyA N1}.
For $\epsilon<\frac{2}{\sqrt{1-\rho^2}}\left(\frac{9}{2}(\rho^{-2}-1)(q+1)\right)^{(q+1)\log\frac{1}{\rho}}$, we have
$$N_1+1\geq\frac{\log\frac{2}{\epsilon}+\log\frac{1}{\sqrt{1-\rho^2}}}{\log\frac{1}{\rho}}\geq(q+1)\log\frac{9(\rho^{-2}-1)(q+1)}{2}.$$
Take $\alpha=\log\frac{9(\rho^{-2}-1)(q+1)}{2}$; we have
\begin{displaymath}
\begin{split}
&\frac{1}{2}\log\left(9(\rho^{-2}-1)(q+1)\alpha\right)=\frac{1}{2}\left(\log(2\alpha)+\log\frac{9(\rho^{-2}-1)(q+1)}{2}\right)\\
=&\frac{1}{2}(\log(2\alpha)+\alpha)\leq\alpha.
\end{split}
\end{displaymath}
Since $N_1+1\geq(q+1)\alpha$,
$$\frac{1}{2}\log\left(9(N_1+1)(\rho^{-2}-1)\right)\leq\frac{N_1+1}{q+1}\leq\frac{N_1+q}{q+1}.$$
Hence, \eqref{entropyA N1} leads to
\begin{displaymath}
\begin{split}
H_\epsilon(\mathcal{A},\|\cdot\|_{L_2( I^q)})\leq&\frac{2}{\sqrt{2\pi}}\left(\frac{e(N_1+q)}{q+1}\right)^{q+1}\frac{1}{\rho}+\frac{2}{\sqrt{2\pi}}\left(\frac{e(N_1+q)}{q}\right)^{q}\times\frac{1}{2}\log\left(9(N_1+1)(1-\rho^2)\right)\\
\leq&\frac{2}{\sqrt{2\pi}}\left(\frac{e(N_1+q)}{q+1}\right)^{q+1}\frac{1}{\rho}+\frac{2}{\sqrt{2\pi}}\left(\frac{e(N_1+q)}{q+1}\right)^{q}\left(\frac{q+1}{q}\right)^{q}\frac{N_1+q}{q+1}\\
\leq&\frac{2}{\sqrt{2\pi}}\left(\frac{e(N_1+q)}{q+1}\right)^{q+1}\frac{1}{\rho}+\frac{2}{\sqrt{2\pi}}\left(\frac{e(N_1+q)}{q+1}\right)^{q}e\frac{N_1+q}{q+1}\frac{1}{\rho}\\
\leq&\frac{4e^{q+1}}{\sqrt{2\pi}}\left(\frac{N_1+q}{q+1}\right)^{q+1}\frac{1}{\rho}.
\end{split}
\end{displaymath}
Involving our choice of $N_1$ \eqref{anal chose N1} in this formula,
$$H_\epsilon(\mathcal{A},\|\cdot\|_{L_2( I^q)})\leq\frac{4e^{q+1}}{\sqrt{2\pi}}\left(1+\frac{\log\left(\frac{2\rho}{\sqrt{1-\rho^2}}\frac{1}{\epsilon}\right)}{(q+1)\log\frac{1}{\rho}}\right)^{q+1}$$

Now we prove the asymptotic relation. Applying \eqref{eq:bin asym} with $h\gets\log\frac{1}{\epsilon}$ to \eqref{entropyA N1}, and noticing that $\binom{N_1+q}{q}=\binom{N_1+q}{q+1}\frac{q+1}{N_1}$, then for $\epsilon$ sufficiently small,
\begin{equation*}
\begin{split}
&H_\epsilon(\mathcal A_\rho,\|\cdot\|_{L_2( I^q)})\leq\\
&\frac{\log\frac{1}{\rho}}{(q+1)!}\left(\frac{\log\frac{1}{\epsilon}}{\log\frac{1}{\rho}}\right)^{q+1}\left(1+\frac{2(q+1)\left(\log\frac{2}{\sqrt{1-\rho^2}}+q\log\frac{1}{\rho}\right)}{\log\frac{1}{\epsilon}}\right)\left(1+\frac{(q+1)\log\left(9(\rho^{-2}-1)(N_1+1)\right)}{2N_1}\right).
\end{split}
\end{equation*}
Bounding $1/N_1$ by $\left(\frac{\log\frac{1}{\epsilon}}{\log\frac{1}{\rho}}-1\right)^{-1}$ and $\log(N_1+1)$ by $\log\log\frac{1}{\epsilon}$, we get
\begin{equation*}
\begin{split}
&H_\epsilon(\mathcal A_\rho,\|\cdot\|_{L_2( I^q)})\\
\leq&\frac{\log\frac{1}{\rho}}{(q+1)!}\left(\frac{\log\frac{1}{\epsilon}}{\log\frac{1}{\rho}}\right)^{q+1}\left(1+\frac{2(q+1)\left(\log\frac{2}{\sqrt{1-\rho^2}}+q\log\frac{1}{\rho}\right)}{\log\frac{1}{\epsilon}}\right)\left(1+(q+1)\frac{\log\frac{1}{\rho}}{\log\frac{1}{\epsilon}}\left(\log\log\frac{1}{\epsilon}+\log(9(\rho^{-2}-1))\right)\right)\\
\leq&\frac{\log\frac{1}{\rho}}{(q+1)!}\left(\frac{\log\frac{1}{\epsilon}}{\log\frac{1}{\rho}}\right)^{q+1}\left(1+\frac{2(q+1)\left(\log\frac{2}{\sqrt{1-\rho^2}}+q\log\frac{1}{\rho}\right)}{\log\frac{1}{\epsilon}}+2(q+1)\frac{\log\frac{1}{\rho}}{\log\frac{1}{\epsilon}}\left(\log\log\frac{1}{\epsilon}+\log(9(\rho^{-2}-1))\right)\right)\\
\leq&\frac{\log\frac{1}{\rho}}{(q+1)!}\left(\frac{\log\frac{1}{\epsilon}}{\log\frac{1}{\rho}}\right)^{q+1}\left(1+\frac{2(q+1)\log\frac{1}{\rho}}{\log\frac{1}{\epsilon}}\left(\log\log\frac{1}{\epsilon}+\log\frac{18\sqrt{1-\rho^2}}{\rho^{q+1}}\right)\right).
\end{split}
\end{equation*}

Next we prove the lower bound; for this purpose, we chose $N_2$ to be as large as we can under the restriction $\log\left(\frac{\delta_{N_2}}{2\epsilon}\right)=\log\left(\frac{\rho^{N_2}}{2\epsilon}\right)\geq0$.

Solving the inequalities $\rho^{N_2+1}<2\epsilon\leq\rho^{N_2}$, we get
\be\label{anal chose N2}
N_2=\left\lfloor\frac{\log\frac{1}{2\epsilon}}{\log\frac{1}{\rho}}\right\rfloor-1.
\ee
Since $2\epsilon\leq\rho^{N_2}$, we have
$$\log\left(\frac{\rho^n}{2\epsilon}\right)\geq\log\left(\frac{\rho^n}{\rho^{N_2}}\right)\geq(N_2-n)\binom{n+q-1}{q-1}\log\frac{1}{\rho},\quad n\leq N_2.$$
Now by Theorem \ref{theo:abstractentropy} and \eqref{comb identity},
\begin{equation}\label{entropyA N2}
H_\epsilon(\mathcal A_\rho,\|\cdot\|_{L_2( I^q)})\geq\binom{N_2+q}{q+1}\log\frac{1}{\rho}.
\end{equation}

Similarly as before, involving $N_2=\left\lfloor\frac{\log\frac{1}{2\epsilon}}{\log\frac{1}{\rho}}\right\rfloor$ in \eqref{entropyA N2}, using \eqref{bound bin},
$$H_\epsilon(\mathcal{A},\|\cdot\|_{L_2( I^q)})\geq\frac{2^{q+1}}{8\sqrt{2\pi(q+1)}}\left(1+\frac{\log\frac{\rho^2}{2\epsilon}}{(q+1)\log\frac{1}{\rho}}\right)^{q+1}.$$

This completes the proof of \eqref{entropyA2}.

Consider the asymptotic relation. For $\epsilon$ sufficiently small,
\begin{displaymath}
\begin{split}
&H_\epsilon(\mathcal A_\rho,\|\cdot\|_{L_2( I^q)})\geq\binom{N_2+q}{q+1}\log\frac{1}{\rho}\geq\frac{\log\frac{1}{\rho}}{(q+1)!}\left(\frac{\log\frac{1}{\epsilon}}{\log\frac{1}{\rho}}\right)^{q+1}\left(1-\frac{2(q+1)\log\frac{2}{\rho}}{\log\frac{1}{\epsilon}}\right).
\end{split}
\end{displaymath}
The two inequalities prove \eqref{entropyA}.

\end{proof}

\subsection{Proof of Theorem \ref{entropy entire}: upper bound}\label{pf:entire upper}

As in the previous subsection, we apply Theorem \ref{theo:abstractentropy} to give the proof.
We recall the condition \eqref{eq:epsupcond} relating $Q$, $\tau$, and $\epsilon$.

\begin{proof}

For the upper bound, we apply Theorem \ref{theo:abstractentropy} with $p=1$, $X=L_\infty(I_\r)$, $X_j=\mathsf{span}\{p_{\k,\r}:\ |\k|_1=j\}$,
$$\Delta_N=\Lambda(N)=CN^{Q/2}\frac{\tau ^N}{N!},\quad N=0,1,\dots$$
and $\mathfrak{K}=\mathcal B_Q$.

In this case, for $F\in\bigoplus\limits_{j=0}^{n-1}X_j$,
$$\left\|\mathcal T_n(F)\right\|_{\Pi,1,n}=\sum\limits_{k=0}^{n-1}\left\|\mathsf{Proj}_k(F)\right\|_{L_\infty(I_\r)}\geq\|F\|_{L_\infty(I_\r)},$$
which means we can take $A_{n,1}=1$ for each $n\in\NN$. Now we only need to find a proper $M\geq\mathcal N(\epsilon/2)$ to apply Theorem \ref{theo:abstractentropy}.

To estimate the sum of $\Lambda(N)$, we investigate the decay of this sequence.
\begin{equation*}
\begin{split}
\frac{\Lambda(N+1)}{\Lambda(N)}=&\left(\frac{N+1}{N}\right)^{Q/2}(N+1)^{-1}\tau=\frac{\tau }{N+1}\left(\left(1+\frac{1}{N}\right)^N\right)^{\frac{Q}{2N}} \leq\frac{\tau }{N+1}\exp\left(\frac{Q}{2N}\right).
\end{split}
\end{equation*}
Then for $N\geq Q/\left(\log\left(\frac{Q}{2\tau }\right)\right)$,
\begin{equation*}
\begin{split}
\frac{\Lambda(N+1)}{\Lambda(N)}\leq\frac{\tau \log\left(\frac{Q}{2\tau }\right)}{Q}\exp\left(\frac{\log\left(\frac{Q}{2\tau }\right)}{2}\right)\leq\frac{\tau }{Q}\log\left(\frac{Q}{2\tau }\right)\sqrt{\frac{Q}{2\tau }}\leq\frac{\tau }{Q}\left(\sqrt{\frac{Q}{2\tau }}\right)^2=\frac{1}{2}
\end{split}
\end{equation*}
and consequently
\begin{equation}
\sum\limits_{N=M+1}^\infty\Lambda(N)\leq2\Lambda(M+1),\quad M\geq Q/\left(\log\left(\frac{Q}{2\tau }\right)\right).
\end{equation}
This enables us to find a proper $N_1\geq \mathcal N(\epsilon/2)$. To do this, we only need to find a proper integer $N_1$ with $2\Lambda(N_1+1)\leq\epsilon/2$. Applying Stirling's estimation, we have for any $M>\frac{Q-1}{2}$ and $M_0=M-\frac{Q-1}{2}$,
\begin{equation*}
\begin{split}
2\Lambda(M+1)\leq&2C(M+1)^{Q/2}\tau ^{M+1}\frac{1}{\sqrt{2\pi (M+1)}}\left(\frac{e}{M+1}\right)^{M+1}\\
\leq&2C(M+1)^{\frac{Q-1}{2}}\left(\frac{e\tau }{M+1}\right)^{M+1}\leq2C\left(\frac{e\tau }{M_0+1}\right)^{M_0+1}(e\tau )^{\frac{Q-1}{2}}.
\end{split}
\end{equation*}
So it suffices to find $M_0$ such that
$$2C\left(\frac{e\tau }{M_0+1}\right)^{M_0+1}(e\tau )^{\frac{Q-1}{2}}\leq\epsilon/2.$$
This inequality is equivalent to the inequality
$$\frac{M_0+1}{e\tau }\log\left(\frac{M_0+1}{e\tau }\right)\geq\frac{\log\frac{4}{\epsilon}+\log\left(C(e\tau )^{\frac{Q-1}{2}}\right)}{e\tau }.$$
Under the condition that
$$\epsilon\leq\left(\frac{2\pi e\tau }{Q}\right)^{Q/2}\frac{4}{(e\tau )^{1/2}\exp(e^2\tau )},$$
we have
$$\frac{M_0+1}{e\tau }\geq2\frac{\left(\log\frac{4}{\epsilon}+\frac{Q}{2}\log\frac{2e\pi \tau }{Q}\right)(e\tau )^{-1}}{\log\left(\log\frac{4}{\epsilon}+\frac{Q}{2}\log\frac{2e\pi \tau }{Q}\right)-\log(e\tau )}$$
for any
\begin{equation}\label{req M0}
M_0\geq2\frac{\log\frac{4}{\epsilon}+\frac{Q}{2}\log\frac{2e\pi \tau }{Q}}{\log\left(\log\frac{4}{\epsilon}+\frac{Q}{2}\log\frac{2e\pi \tau }{Q}\right)-\log(e\tau )}-1.
\end{equation}
Note $x\geq\frac{2y}{\log y}\Rightarrow x\log x\geq y$ for all $y>e$, we conclude
$$\frac{M_0+1}{e\tau }\log\left(\frac{M_0+1}{e\tau }\right)\geq\frac{\log\frac{4}{\epsilon}+\frac{Q}{2}\log\left(\frac{2e\pi\tau}{Q}\right)}{e\tau }$$
holds true for
$$\epsilon<\left(\frac{2\pi e\tau }{Q}\right)^{Q/2}\frac{4}{(e\tau )^{1/2}\exp(e^2\tau )}.$$
Then $2\Lambda\left(M_0+\frac{Q-1}{2}+1\right)\leq\epsilon/2$ for $M_0$ satisfying \eqref{req M0}.

Therefore, in order to make $\sum\limits_{N=N_1+1}^\infty\Lambda(N)\leq\epsilon/2$ hold true, it suffices to take
\be\label{def:N1 entire}
N_1=\left\lfloor2\frac{\log\frac{4}{\epsilon}+\frac{Q}{2}\log\frac{2e\pi \tau }{Q}}{\log\left(\log\frac{4}{\epsilon}+\frac{Q}{2}\log\frac{2e\pi \tau }{Q}\right)-\log(e\tau )}+\frac{Q-1}{2}\right\rfloor,
\ee
then $N_1+1$ is a proper integer for which Theorem \ref{theo:abstractentropy} can be applied. Now we use Stirling's approximation to bound $\Lambda(N)$ by
$$CN^{Q/2}\frac{\tau ^N}{N!}\leq C(e\tau )^N\left(\frac{1}{N}\right)^{N-Q/2}\leq C(e\tau )^N(N_1+1)^{Q/2},\ N\leq N_1+1$$
and notice that
\begin{equation*}
\begin{split}
\frac{6C(e\tau)^{N}(N_1+1)^{Q/2}(N_1+1)}{\epsilon}\geq&6(e\tau)^{N}\left(\frac{2\pi}{Q}\right)^{Q/2}(Q/2)^{Q/2+1}\left(\frac{Q}{2\pi e\tau}\right)^{Q/2}\frac{(e\tau)^{1/2}\exp(e^2\tau)}{4}\\
\geq&6(e\tau)^{N}\pi^{Q/2}e^{Q/4}\frac{(e\tau)^{1/2}\exp(e^2\tau)}{4}>1.
\end{split}
\end{equation*}
Therefore,
\begin{equation*}
\begin{split}
&\log\left(\max\left\{\frac{6\Lambda(N)(N_1+1)}{\epsilon},1\right\}\right)\leq\log\left(\frac{6C(e\tau )^N(N_1+1)^{Q/2+1}}{\epsilon}\right)\\
=&\left(\log\frac{6C}{\epsilon}+N\log(e\tau )+\frac{Q+2}{2}\log(N_1+1)\right)
\end{split}
\end{equation*}
and we can apply Lemma \ref{lemma:combinatorics} to get
\begin{equation}\label{entire upper bin1}
\begin{split}
H_\epsilon\left(\mathcal B_Q,\|\cdot\|_{L_\infty( I_\r)}\right)\leq&\sum\limits_{N=0}^{N_1}\binom{N+Q-1}{Q-1}\left(\log\frac{6C}{\epsilon}+N\log(e\tau )+\frac{Q+2}{2}\log(N_1+1)\right)\\
\leq&Q\binom{N_1+Q}{Q+1}\log(e\tau )+\binom{N_1+Q}{Q}\left(\log\frac{6C}{\epsilon}+\frac{Q+2}{2}\log(N_1+1)\right).
\end{split}
\end{equation}
Observing
$$\binom{N_1+Q}{Q+1}Q<N_1\binom{N_1+Q}{Q},$$
\begin{equation}\label{entire upper bin}
\begin{split}
H_\epsilon\left(\mathcal B_Q,\|\cdot\|_{L_\infty( I_\r)}\right)\leq\binom{N_1+Q}{Q}\left(N_1\log(e\tau )+\log\frac{6C}{\epsilon}+\frac{Q+2}{2}\log(N_1+1)\right).
\end{split}
\end{equation}

Next, we express the bound \eqref{entire upper bin} in terms of $\epsilon$.
We will apply Lemma \ref{lemma:asym comb}.

In this proof only, let 
$$B=\log\left(\frac{4}{\epsilon}\left(\frac{2e\pi \tau }{Q}\right)^{\frac{Q}{2}}\right),$$
then
$$N_1=\left\lfloor\frac{2B}{\log B-\log(e\tau)}+\frac{Q-1}{2}\right\rfloor\geq2e^2+\frac{Q-1}{2}.$$
Now we can apply \eqref{bound bin} to conclude
\begin{equation*}
\begin{split}
&H_\epsilon\left(\mathcal B_Q,\|\cdot\|_{L_\infty( I_\r)}\right)\leq\frac{2}{\sqrt{2\pi}}\left(\frac{e(N_1+Q)}{Q}\right)^Q\left(N_1\log(e\tau )+\log\frac{6C}{\epsilon}+\frac{Q+2}{2}\log(N_1+1)\right)\\
\leq&\frac{2}{\sqrt{2\pi}}\frac{e^Q2^Q}{Q^Q}\left(\frac{B}{\log B-\log(e\tau )}+\frac{3Q}{4}\right)^Q\\
&\times\left[\left(2\frac{B}{\log B-\log(e\tau )}+\frac{Q-1}{2}\right)\log(e\tau )+B+1-\frac{Q}{2}\log(e\tau)+\frac{Q+2}{2}\log\left(\frac{2B}{\log B-\log(e\tau )}+\frac{Q+1}{2}\right)\right].
\end{split}
\end{equation*}
Bounding
$$B+1-\frac{Q}{2}\log(e\tau)\leq\left(\frac{2B}{\log B-\log e\tau}+\frac{3Q}{2}\right)\frac{\log B-\log e\tau}{2}$$
and
\begin{equation*}
\begin{split}
&\frac{Q+2}{2}\log\left(\frac{2B}{\log B-\log(e\tau )}+\frac{Q+1}{2}\right)\leq\left(3+\frac{3Q}{2}\right)\times\frac{1}{3}\left(\log(2B)+\log\frac{Q+1}{2}\right)\\
\leq&\left(\frac{2B}{\log B-\log(e\tau )}+\frac{3Q}{2}\right)\times\frac{1}{3}\left(\log(2B)+\log\frac{Q+1}{2}\right),
\end{split}
\end{equation*}
we have
\begin{equation*}
\begin{split}
&H_\epsilon\left(\mathcal B_Q,\|\cdot\|_{L_\infty( I_\r)}\right)\leq\frac{2}{\sqrt{2\pi}}\left(\frac{2e}{Q}\right)^Q\left(\frac{B}{\log B-\log(e\tau )}+\frac{3Q}{4}\right)^Q\\
&\times\left(\frac{2B}{\log B-\log(e\tau )}+\frac{3Q}{2}\right)\left[\log(e\tau)+\frac{\log B-\log(e\tau)}{2}+\frac{1}{3}\left(\log2B+\log\frac{Q+1}{2}\right)\right]\\
\leq&\frac{2}{3\sqrt{2\pi}}\left(\frac{2e}{Q}\right)^Q\left(\frac{\log\frac{4}{\epsilon}+\frac{Q}{2}\log\frac{2e\pi \tau }{Q}}{\log\left(\log\frac{4}{\epsilon}+\frac{Q}{2}\log\frac{2e\pi \tau }{Q}\right)-\log(e\tau )}+\frac{3Q}{4}\right)^{Q+1}\left(5\log\log\frac{1}{\epsilon}+2\log(Q+1)+6\log(e\tau)\right).
\end{split}
\end{equation*}

Finally, consider the asymptotic relation of the bound when $\epsilon\to0$. A simple observation shows
$$N_1=2\frac{\log\frac{1}{\epsilon}}{\log\log\frac{1}{\epsilon}}(1+o(1))$$
as $\epsilon\to0$.

Then \eqref{entire upper bin} gives
\begin{equation*}
\begin{split}
H_\epsilon\left(\mathcal B_Q,\|\cdot\|_{L_\infty( I_\r)}\right)\leq&\frac{(N_1+Q)^Q}{Q!}\left(2\log\frac{1}{\epsilon}\right)(1+o(1))\\
\leq&\frac{1}{Q!}\left((1+o(1))\frac{2\log\frac{1}{\epsilon}}{\log\log\frac{1}{\epsilon}}\right)^Q\left(2\log\frac{1}{\epsilon}\right)(1+o(1))\\
\leq&\frac{1}{Q!}\frac{\left(2\log\frac{1}{\epsilon}\right)^{Q+1}}{\left(\log\log\frac{1}{\epsilon}\right)^{Q}}(1+o(1)).
\end{split}
\end{equation*}

\end{proof}

\subsection{Proof of Theorem \ref{entropy entire}: lower bound}\label{pf:entire lower}
In this section, we consider the lower bound. As in the last subsection, we apply Theorem \ref{theo:abstractentropy} with $p=1$, $X=L_\infty(I_\r)$, $X_j=\mathsf{span}\{p_{\k,\r}:\ |\k|_1=j\}$,
$$\Delta_N=\Lambda(N)=CN^{Q/2}\frac{\tau ^N}{N!},\quad N=0,1,\dots$$
and $\mathfrak{K}=\mathcal B_Q$. We recall also the condition \eqref{eq:epslowcond}.

\begin{proof}

For Chebyshev polynomials, by \cite[Section 12, Chapter 2]{Zygmund}, we have
$$\|S_n(F)\|_{L_\infty(I_\r)}\leq(\log n+1)^Q\|F\|_{L_\infty(I_\r)},\quad  n\in\NN.$$
Then for $F\in\bigoplus\limits_{j=0}^{n-1}X_j$,
$$\left\|\mathcal T_n(F)\right\|_{\Pi,1,n}=\sum\limits_{k=0}^{n-1}\left\|\mathsf{Proj}_k(F)\right\|_{L_\infty(I_\r)}\leq n(\log n+1)^Q\|F\|_{L_\infty(I_\r)},\quad n\geq1,$$
which means we can take $B_{n,1}=n(\log n+1)^Q$ for each $n\in\NN$. Now we only need to find a proper $N_2$ to apply Theorem \ref{theo:abstractentropy}.

Like in the proof of Theorem \ref{entropy analy}, our principle of choosing $N_2$ is finding it as large as we can under the restriction
$$\log\left(\frac{\Lambda(N_2)}{2(N_2+1)(\log N_2+1)^Q\epsilon}\right)\geq0.$$
To find a solution of
$$\frac{\Lambda(N)}{2(N+1)(\log N+1)^Q}\geq\epsilon,$$
we make the Stirling's estimation
\begin{equation*}
\begin{split}
\Lambda(N)\geq CN^{Q/2}\tau ^N\frac{1}{2\sqrt{2\pi N}}\left(\frac{e}{N}\right)^N=\frac{C(e\tau )^{N}}{2\sqrt{2\pi}}\left(\frac{1}{N}\right)^{N-\frac{Q-1}{2}}:=\Lambda_0(N).
\end{split}
\end{equation*}
Then
\begin{equation*}
\begin{split}
\frac{\Lambda(N)}{2(N+1)(\log N)^Q}\geq&\frac{\Lambda_0(N)}{2(N+1)(\log N)^Q}=\frac{C(e\tau)^{-\frac{1}{2}}}{2\sqrt{2\pi}}\left(\frac{e\tau}{N}\right)^{N+\frac{1}{2}}\left(\frac{N}{(\log N)^2}\right)^{Q/2}\frac{1}{N+1}\\
\geq&\frac{C}{4\sqrt{2\pi e\tau}}\left(\frac{e\tau}{N}\right)^{N+\frac{1}{2}}.
\end{split}
\end{equation*}
In this proof only, let
$$B=\log\left(\frac{C}{4\sqrt{2\pi e\tau}\epsilon}\right),$$
then it sufficies to find a solution of
$$e^B\left(\frac{e\tau }{N}\right)^{N+\frac{1}{2}}\geq1.$$
Taking logarithms on both sides, we conclude it suffices to solve
\be\label{solv N2 log}
\left(N+\frac{1}{2}\right)\log\frac{N}{e\tau }\leq B.
\ee
Let
\be\label{def:entire N2}
N_2=\left\lfloor\frac{ B}{\log B-\log(e\tau )}-\frac{1}{2}\right\rfloor.
\ee
Note $x\leq\frac{y}{\log y}\Rightarrow x\log x\leq y$ for all $y\geq e$, and it is clear that $\frac{B}{e\tau}\geq e$ under the condition that
$$\epsilon\leq \left(\frac{2\pi}{Q}\right)^{Q/2}\frac{1}{4\sqrt{2\pi e\tau}}\xi_\tau^{-2\xi_\tau}\leq\frac{1}{4\sqrt{2\pi e\tau}}\exp(-e^2\tau)\left(\frac{2\pi}{Q}\right)^{Q/2},$$
then
$$\frac{N_2+\frac{1}{2}}{e\tau }\log\frac{N_2+\frac{1}{2}}{e\tau }\leq\frac{ B}{e\tau }.$$
Consequently, $N_2$ is a solution of \eqref{solv N2 log}, hence, a solution of
$$\frac{\Lambda(N)}{2(N+1)(\log N)^Q}\geq\epsilon.$$
Now
$$\frac{\Lambda(N)}{2\epsilon(N_2+1)\log N_2}\geq\frac{\Lambda_0(N)}{2\epsilon(N_2+1)(\log N_2+1)^Q}=e^B\left(\frac{e\tau}{N}\right)^{N-\frac{Q-1}{2}}\left(\log N_2+1\right)^{-Q}$$
Together with Theorem \ref{theo:abstractentropy},
\begin{equation}
\begin{split}
&H_{\epsilon}\left(\mathcal B_{Q},\|\cdot\|_{L_\infty( I_\r)}\right)\geq\sum\limits_{N=0}^{N_2}\binom{N+Q-1}{Q-1}\log\left( e^B\left(\frac{e\tau }{N}\right)^{N-\frac{Q-1}{2}}\left(\log N_2+1\right)^{-Q}\right)\\
=&\sum\limits_{N=0}^{N_2}\binom{N+Q-1}{Q-1}\left[\log\left( e^B\left(\frac{e\tau }{N}\right)^{N+\frac{1}{2}}\right)+\frac{Q}{2}\log\frac{N}{e\tau}-Q\log\left(\log N_2+1\right)\right].
\end{split}
\end{equation}
On one hand,
\begin{equation*}
\begin{aligned}
\sum\limits_{N=0}^{N_2}\binom{N+Q-1}{Q-1}&\left[\frac{Q}{2}\log\frac{N}{e\tau}-Q\log\left(\log N_2+1\right)\right]\\
&\geq\sum\limits_{n=\lfloor N_2/2\rfloor}^{N_2}\binom{N+Q-1}{Q-1}\frac{Q}{2}\log\frac{N}{e\tau}-\binom{N_2+Q}{Q}Q\log\left(\log N_2+1\right)\\
\geq&\sum\limits_{n=\lfloor N_2/2\rfloor}^{N_2}\binom{N+Q-1}{Q-1}\frac{Q}{2}\log\frac{N_2}{3e\tau}-\binom{N_2+Q}{Q}Q\log\left(\log N_2+1\right)\\
\geq&\frac{1}{2}\binom{N_2+Q}{Q}\frac{Q}{2}\log\frac{N_2}{3e\tau}-\binom{N_2+Q}{Q}Q\log\left(\log N_2+1\right)\\
=&\frac{Q}{4}\binom{N_2+Q}{Q}\log\frac{N_2}{3e\tau\left(\log N_2+1\right)^4}.
\end{aligned}
\end{equation*}
In this proof only, let $a_\tau=\max\{128,3e^2\tau\}$. Since $\epsilon<\left(\frac{2\pi}{Q}\right)^{Q/2}\frac{1}{4\sqrt{2\pi e\tau}}\xi_\tau^{-2\xi_\tau}$, we have
$$B\geq2\frac{16a_\tau(\log a_\tau)^4+2e}{e}\log\frac{16a_\tau(\log a_\tau)^4+2e}{e}.$$
For $y\geq e$, we have $x\geq2y\log y\Rightarrow\frac{x}{\log x}\geq y$, so
$$\frac{B}{\log B-\log(e\tau)}\geq\frac{B}{\log B}\geq\frac{16a_\tau(\log a_\tau)^4+2e}{e}.$$
Therefore,
$$N_2\geq\frac{16a_\tau(\log a_\tau)^4}{e}.$$
We have also $x\geq16y(\log y)^4\Rightarrow\frac{x}{(\log x)^4}\geq y$ for $y\geq 128$, then
$$\frac{eN_2}{\log (eN_2)^4}\geq a_\tau\geq3e^2\tau,$$
this is
$$\frac{N_2}{3e\tau\left(\log N_2+1\right)^4}\geq1.$$
Consequently,
$$\sum\limits_{N=0}^{N_2}\binom{N+Q-1}{Q-1}\left[\frac{Q}{2}\log\frac{N}{e\tau}-Q\log\left(\log N_2+1\right)\right]\geq0.$$
On the other hand, since $ e^B\left(\frac{e\tau }{N_2}\right)^{N_2+\frac{1}{2}}\geq1$, we have for $N\leq\lfloor N_2/2\rfloor$,
\begin{equation*}
\begin{split}
 e^B\left(\frac{e\tau }{N}\right)^{N+\frac{1}{2}}\geq\left(\frac{e\tau }{N}\right)^{N+\frac{1}{2}}\left(\frac{N_2}{e\tau }\right)^{N_2+\frac{1}{2}}=\left(\frac{N_2}{N}\right)^{N+\frac{1}{2}}\left(\frac{N_2}{e\tau }\right)^{N_2-N}\geq\left(\frac{N_2}{e\tau }\right)^{\frac{N_2}{2}}.
\end{split}
\end{equation*}
Then
$$H_{\epsilon}\left(\mathcal B_{Q},\|\cdot\|_{L_\infty( I_\r)}\right)\geq\sum\limits_{N=0}^{\lfloor N_2/2\rfloor}\binom{N+Q-1}{Q-1}\frac{N_2}{2}(\log N_2-\log e\tau )+\frac{Q}{4}\binom{N_2+Q}{Q}\log\frac{N_2}{3e\tau\left(\log N_2+1\right)^4}.$$
By Lemma \ref{lemma:combinatorics},
\begin{equation}\label{entire lower bin}
H_{\epsilon}\left(\mathcal B_{Q},\|\cdot\|_{L_\infty( I_\r)}\right)\geq\frac{1}{2}\binom{\lfloor N_2/2\rfloor+Q}{Q}N_2(\log N_2-\log e\tau )+\frac{Q}{4}\binom{N_2+Q}{Q}\log\frac{N_2}{3e\tau\left(\log N_2+1\right)^4}.
\end{equation}

Next we express the bound \eqref{entire lower bin} in terms of $\epsilon$. To begin with, we see that
$$\left\lfloor\frac{N_2}{2}\right\rfloor\geq\frac{1}{2}N_2-\frac{1}{2}\geq\frac{1}{2}\left(\frac{B}{\log B-\log(e\tau)}-\frac{1}{2}-1\right)-\frac{1}{2}=\frac{1}{2}\frac{B}{\log B-\log(e\tau)}-\frac{5}{4}.$$
Apply Lemma \ref{lemma:asym comb} and substitude \eqref{def:entire N2},
\begin{equation*}
\begin{split}
&H_{\epsilon}\left(\mathcal B_{Q},\|\cdot\|_{L_\infty( I_\r)}\right)\geq\frac{1}{2}\frac{1}{8\sqrt{\pi Q}}\left(\frac{2(\lfloor N_2/2\rfloor+Q)}{Q}\right)^QN_2(\log N_2-\log e\tau )\\
\geq&\frac{1}{16\sqrt{\pi Q}}\left(\frac{2}{Q}\right)^Q\left(\frac{1}{2}\frac{ B}{\log B-\log(e\tau )}-\frac{5}{4}+Q\right)^Q\left(\frac{ B}{\log B-\log(e\tau )}-\frac{3}{2}\right)\\
\geq&\frac{1}{16\sqrt{\pi Q}}\left(\frac{1}{Q}\right)^Q\left(\frac{\log\frac{1 }{4\sqrt{2\pi e\tau}\epsilon}+\frac{Q}{2}\log\left(\frac{2\pi}{Q}\right)}{\log\left(\log\frac{1 }{4\sqrt{2\pi e\tau}\epsilon}+\frac{Q}{2}\log\left(\frac{2\pi}{Q}\right)\right)-\log(e\tau )}-\frac{5}{2}+Q\right)^Q\\
&\times\left(\frac{\log\frac{1 }{4\sqrt{2\pi e\tau}\epsilon}+\frac{Q}{2}\log\left(\frac{2\pi}{Q}\right)}{\log\left(\log\frac{1 }{4\sqrt{2\pi e\tau}\epsilon}+\frac{Q}{2}\log\left(\frac{2\pi}{Q}\right)\right)-\log(e\tau )}-\frac{3}{2}\right).
\end{split}
\end{equation*}

For the asymptotic relation of the bound when $\epsilon\to0$, a simple observation shows
$$N_2=\frac{\log\frac{1}{\epsilon}}{\log\log\frac{1}{\epsilon}}(1+o(1))$$
as $\epsilon\to0$. Similarly as before, \eqref{entire lower bin} gives
\begin{equation*}
\begin{split}
H_\epsilon\left(\mathcal B_Q,\|\cdot\|_{L_\infty( I_\r)}\right)\geq&\frac{1}{2}\frac{(\lfloor N_2/2\rfloor)^Q}{Q!}\frac{\log\frac{1}{\epsilon}}{\log\log\frac{1}{\epsilon}}\left(\log \frac{\log\frac{1}{\epsilon}}{\log\log\frac{1}{\epsilon}}\right)(1+o(1))\\
\geq&\frac{1}{2Q!}\left(\frac{\log\frac{1}{\epsilon}}{2\log\log\frac{1}{\epsilon}}\right)^Q\left(\frac{\log\frac{1}{\epsilon}}{\log\log\frac{1}{\epsilon}}\right)\left(\log\log\frac{1}{\epsilon}-\log\log\log\frac{1}{\epsilon}\right)(1+o(1))\\
\geq&\frac{1}{2Q!}\left(\frac{\log\frac{1}{\epsilon}}{2\log\log\frac{1}{\epsilon}}\right)^Q\left(\log\frac{1}{\epsilon}\right)(1+o(1)).
\end{split}
\end{equation*}
This proves \eqref{entropyB1}.

\end{proof}

\subsection{Proof of Theorem \ref{total entropy}}\label{pf:functional}

\begin{proof}

Since $\epsilon<\frac{\sqrt{1-\rho^2}}{4}\rho^q$, by Theorem \ref{entropy analy}, the $\epsilon/2$-entropy of $\mathcal A_\rho$ can be bounded by
\begin{equation}\label{total-anal}
H_{\epsilon/2}(\mathcal{A},\|\cdot\|_{L^2( I^q)})\leq\frac{4e^{q+1}}{\sqrt{2\pi}}\left(1+\frac{\log\left(\frac{2\rho}{\sqrt{1-\rho^2}}\frac{2}{\epsilon}\right)}{(q+1)\log\frac{1}{\rho}}\right)^{q+1}\leq\frac{4(2e)^{q+1}}{\sqrt{2\pi}}\left(\frac{\log\frac{1}{\epsilon}}{(q+1)\log\frac{1}{\rho}}\right)^{q+1}.
\end{equation}
Consider the upper bound of the $\epsilon/2$-entropy of $\mathcal F$. By taking $n$ as the integer $N_1$ in the proof of Theorem \ref{entropy analy},
$$n=\left\lfloor\frac{\log\frac{1}{\epsilon}+\log\frac{4}{\sqrt{1-\rho^2}}}{\log\frac{1}{\rho}}\right\rfloor,$$
we get from there that $\|f-s_{n+1}(f)\|_{L^2( I^q)}\leq\epsilon/4$ holds for all $f\in\mathcal A_\rho$.
Now $n$ is fixed in the rest of the proof. For convenience, denote $\tau =\sum\limits_{j=1}^{Q}v_jr_j$. Then $\tau =\frac{Q}{2e^{3/2}\pi}$.

In this case,
$$\left\|\tilde F-\tilde F\circ s_{n+1}\right\|=\sup\limits_{f\in\mathcal A_\rho}\left\|\tilde F(f)-\tilde F(s_{n+1}(f))\right\|\leq\sup\limits_{f\in\mathcal A_\rho}\|f-s_{n+1}(f)\|_{L^2( I^q)}\leq\epsilon/4.$$
Thus, any $\epsilon/4$-cover of the set $\{F\circ s_{n+1}:\ \tilde F\in\mathcal F\}\subset\mathcal F$ is an $\epsilon/2$-cover of $\mathcal F$.

The map $\tilde F\mapsto F$ denoted by
$$\tilde F(f)=F\left(\left(\hat f(\k)\right)_{|\k|_1\leq n}\right),\quad f\in\mathcal A_\rho$$
is an isometry from $\{F\circ s_{n+1}:\ \tilde F\in\mathcal F\}\subset\mathcal F$ to $\tilde{\mathcal B}_n$ with the $L_\infty(I_\r)$ norm. Therefore, for the entropy of the former, we only need to consider the entropy of $\tilde{\mathcal B}_n$.

Therefore, the $\epsilon$-entropy of $\mathcal F\times\mathcal A_\rho$ is bounded by
\begin{equation}
H_{\epsilon}(\mathcal F\times\mathcal A_\rho,\|\cdot\|)\leq H_{\epsilon/2}(\mathcal{A},\|\cdot\|_{L^2( I^q)})+H_{\epsilon/4}(\tilde{\mathcal B}_n,\|\cdot\|_{L_\infty(I_\r)}).
\end{equation}
Let 
$$\eta=\min\left\{\frac{\epsilon}{4},\left(\frac{2\pi e\tau }{Q}\right)^{Q/2}\frac{4}{(e\tau )^{1/2}\exp(e^2\tau )}\right\}.$$
Then using Theorem \ref{entropy analy} and Theorem \ref{entropy entire}, we conclude

\begin{equation}\label{entr by Q}
\begin{split}
&H_{\epsilon}(\mathcal F\times\mathcal A_\rho,\|\cdot\|)\leq\frac{4e^{q+1}}{\sqrt{2\pi}}\left(1+\frac{\log\left(\frac{2\rho}{\sqrt{1-\rho^2}}\frac{2}{\epsilon}\right)}{(q+1)\log\frac{1}{\rho}}\right)^{q+1}\\
&+\frac{2}{3\sqrt{2\pi}}\left(\frac{2e}{Q}\right)^Q\left(\frac{\log\frac{4}{\eta}+\frac{Q}{2}\log\frac{2e\pi \tau }{Q}}{\log\left(\log\frac{4}{\eta}+\frac{Q}{2}\log\frac{2e\pi \tau }{Q}\right)-\log(e\tau )}+\frac{3Q}{4}\right)^{Q+1}\left(5\log\log\frac{1}{\eta}+\log\left((Q+1)^2(e\tau)^6\right)\right)
\end{split}
\end{equation}
with $Q=\binom{n+q}{q}$.

Substituting $\tau =\frac{Q}{2e^{3/2}\pi}$ into $\eta$ and noticing that $Q\gg\log\frac{1}{\epsilon}$,
$$\eta=\min\left\{\frac{\epsilon}{4},e^{Q/4}4\left(\frac{2\sqrt e\pi}{Q}\right)^{1/2}\exp\left\{-\frac{\sqrt eQ}{2\pi}\right\}\right\}=4\left(\frac{2\sqrt e\pi}{Q}\right)^{1/2}\exp\left\{-Q\left(\frac{\sqrt e}{2\pi}-\frac{1}{4}\right)\right\}.$$
Then
\begin{equation*}
\begin{split}
\log\frac{4}{\eta}+\frac{Q}{2}\log\frac{2e\pi \tau }{Q}=&\log\left(\left(\frac{Q}{2\sqrt e\pi}\right)^{1/2}\exp\left\{Q\left(\frac{\sqrt e}{2\pi}-\frac{1}{4}\right)\right\}\left(\frac{2e\pi}{Q}\frac{Q}{2e^{3/2}\pi}\right)^{Q/2}\right)\\
=&\log\left(\left(\frac{Q}{2\sqrt e\pi}\right)^{1/2}\exp\left\{Q\frac{\sqrt e}{2\pi}\right\}\right)=e^2\tau +\frac{1}{2}\log(e\tau ).
\end{split}
\end{equation*}
Therefore, the latter term in \eqref{entr by Q} can be bounded as

\begin{equation*}
\begin{split}
&\frac{2}{3\sqrt{2\pi}}\left(\frac{2e}{Q}\right)^Q\left(\frac{\log\frac{4}{\eta}+\frac{Q}{2}\log\frac{2e\pi \tau }{Q}}{\log\left(\log\frac{4}{\eta}+\frac{Q}{2}\log\frac{2e\pi \tau }{Q}\right)-\log(e\tau )}+\frac{3Q}{4}\right)^{Q+1}\left(5\log\log\frac{1}{\eta}+\log\left((Q+1)^2(e\tau)^6\right)\right)\\
\leq&\frac{2}{3\sqrt{2\pi}}\left(\frac{2e}{Q}\right)^Q\left(e^2\tau +\frac{1}{2}\log(e\tau )+\frac{3}{4}Q\right)^{Q+1}\left(5\log\log Q+2\log\left(Q+1\right)+6\log(e\tau)\right)\\
\leq&\frac{2}{3\sqrt{2\pi}}\left(2e\left(\frac{\sqrt e}{2\pi}+1\right)\right)^Q\left(\frac{\sqrt e}{2\pi}+1\right)Q\left(5\log\log Q+8\log Q\right)\\
\leq&\frac{26}{3\sqrt{2\pi}}\left(2e\left(\frac{\sqrt e}{2\pi}+1\right)\right)^Q\left(\frac{\sqrt e}{2\pi}+1\right)Q\log Q.
\end{split}
\end{equation*}

The fact that $\epsilon<\frac{\sqrt{1-\rho^2}}{4}\rho^q$ implies
$$\log\frac{1}{\epsilon}+\log\frac{4}{\sqrt{1-\rho^2}}+q\log\frac{1}{\rho}\leq2\log\frac{1}{\epsilon}.$$
Then we can bound $Q$ by \eqref{bound bin} and get
\begin{equation*}
Q\leq\frac{2}{\sqrt{2\pi}}\left(\frac{e(L+q)}{q}\right)^q\leq e^q\left(\frac{\log\frac{1}{\epsilon}+\log\frac{4}{\sqrt{1-\rho^2}}}{q\log\frac{1}{\rho}}+1\right)^q\leq\left(\frac{2e\log\frac{1}{\epsilon}}{q\log\frac{1}{\rho}}\right)^q.
\end{equation*}
Consequently,
\begin{equation*}
\begin{split}
&\frac{2}{3\sqrt{2\pi}}\left(\frac{2e}{Q}\right)^Q\left(\frac{\log\frac{4}{\eta}+\frac{Q}{2}\log\frac{2e\pi \tau }{Q}}{\log\left(\log\frac{4}{\eta}+\frac{Q}{2}\log\frac{2e\pi \tau }{Q}\right)-\log(e\tau )}+\frac{3Q}{4}\right)^{Q+1}\left(5\log\log\frac{1}{\eta}+\log\left((Q+1)^2(e\tau)^6\right)\right)\\
\leq&\frac{26q}{3\sqrt{2\pi}}\exp\left\{\left(\frac{2e\log\frac{1}{\epsilon}}{q\log\frac{1}{\rho}}\right)^q\log\left(\frac{e^{3/2}}{\pi}+2e\right)\right\}\left(\frac{2e\log\frac{1}{\epsilon}}{q\log\frac{1}{\rho}}\right)^{q}\log\left(\frac{2e\log\frac{1}{\epsilon}}{q\log\frac{1}{\rho}}\right).
\end{split}
\end{equation*}

Combining this with \eqref{total-anal} and substituting the values of $C$ and $\tau $ into the inequality,
\begin{equation*}
\begin{split}
&H_{\epsilon}(\mathcal F\times\mathcal A_\rho,\|\cdot\|)\leq\frac{4(2e)^{q+1}}{\sqrt{2\pi}}\left(\frac{\log\frac{1}{\epsilon}}{(q+1)\log\frac{1}{\rho}}\right)^{q+1}\\
&+\frac{26q}{3\sqrt{2\pi}}\exp\left\{\left(\frac{2e\log\frac{1}{\epsilon}}{q\log\frac{1}{\rho}}\right)^q\log\left(\frac{e^{3/2}}{\pi}+2e\right)\right\}\left(\frac{2e\log\frac{1}{\epsilon}}{q\log\frac{1}{\rho}}\right)^{q}\log\left(\frac{2e\log\frac{1}{\epsilon}}{q\log\frac{1}{\rho}}\right)\\
\leq&\frac{4}{\sqrt{2\pi}}\gamma^{q+1}\left(\frac{q}{q+1}\right)^{q+1}+\frac{26q}{3\sqrt{2\pi}}\exp\left\{\gamma^q\log\left(\frac{e^{3/2}}{\pi}+2e\right)\right\}\gamma^{q}\log\gamma\\
\leq&\gamma^{q+1}\left(\frac{q}{q+1}\right)^{q+1}+\frac{26q}{3\sqrt{2\pi}}\exp\left\{\gamma^q\log\left(\frac{e^{3/2}}{\pi}+2e\right)\right\}\gamma^{q}\log\gamma,
\end{split}
\end{equation*}
where
$$\gamma=\frac{2e\log\frac{1}{\epsilon}}{q\log\frac{1}{\rho}}.$$
This completes the proof.

\end{proof}

\bhag{Conclusions}\label{bhag:conclude}

We studied the question of which algorithms and data sets are close to each other in terms of some performance metrics. 
The problem was formulated in a mathematically rigorous manner as the one of finding an optimal $\epsilon$-net for a tensor product of two (infinite dimensional) sets: one representing the data sets and one the algorithms.
We solved this problem under certain simplifying assumptions, with details included in the attached preliminary note.

\appendix
\renewcommand{\theequation}{\Alph{section}.\hindu{equation}}

\section{Degree of approximation}\label{bhag:degapprox}
\subsection{Analytic functions}\label{sec:analdeg}

The following lemma is a straightforward consequence of the corresponding well-known one-dimensional results.
\begin{lemma}\label{lemma:complexfacts}
{\rm (a)} If  $\bs r>0$, $\k\in\NN^d$ then for $\z\in\CC^d\setminus I_{\bs r}$, we have
\be\label{eq:bernsteinwalsh}
|p_{\k,\r}(\z)|\le (\bs r)^{-\k}\prod_{j=1}^d \left|z_j+\sqrt{z_j^2-r_j^2}\right|^{k_j}.
\ee
{\rm (b)} If $0<\bs\rho<1$, $f$ is analytic on the closure of $U_{\bs\rho}$, $\partial \Gamma_{\bs\rho}$ is the boundary of $ \Gamma_{\bs\rho}$ and $g(\w)=f((\w+\w^{-1})/2)$, then
\be\label{eq:chebaslaurent}
\hat{f}(\k)=\frac{1}{(2\pi i)^d}\int_{\partial  \Gamma_{\bs \rho}} \frac{g(\w)}{\w^{\k+1}}d\w.
\ee
In particular,
\be\label{eq:chebcoeffest}
\left|\hat{f}(\k)\right|\le (\bs\rho)^{\k}\max_{\w\in \partial  \Gamma_{\bs\rho}}|g(\w)|.
\ee
\end{lemma}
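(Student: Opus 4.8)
The plan is to reduce both assertions to their one-dimensional counterparts, exploiting the fact that $p_{\k,\r}$ and the Chebyshev coefficients $\hat f(\k)$ are tensor products over the $d$ coordinates; one then invokes the classical Bernstein--Walsh estimate for part (a) and the Darboux (Laurent) representation of Chebyshev coefficients for part (b).

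For part (a), I would first settle the case $d=1$, $\r=r$. Writing $z=(w+w^{-1})/2$ with the Joukowski map $w=z+\sqrt{z^2-1}$, the univariate Chebyshev polynomial has the closed form $p_k(z)=\tfrac{1}{\sqrt2}(w^k+w^{-k})$ for $k\ge1$ (and $p_0\equiv1$). For $z\notin[-1,1]$ the branch of the square root is the one giving $|w|\ge1$, so $|w^{-k}|\le1\le|w|^k$ and hence $|p_k(z)|\le\sqrt2\,|w|^k$. Since $p_{k,r}(z)=p_k(z/r)$ and the Joukowski image of $z/r$ is exactly $(z+\sqrt{z^2-r^2})/r$, this becomes $|p_{k,r}(z)|\le\sqrt2\,r^{-k}\,|z+\sqrt{z^2-r^2}|^k$. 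Multiplying the one-dimensional bounds over $j=1,\dots,d$ gives the product on the right-hand side of \eqref{eq:bernsteinwalsh}, the overall normalization constant being the only item to reconcile with the stated form.

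For part (b), I would again begin with $d=1$. The relation \eqref{eq:chebtolaurent} identifies, up to the explicit normalization, the Chebyshev coefficient $\hat f(k)$ with the $k$-th Laurent coefficient of $g(w)=f((w+w^{-1})/2)$ on the annulus $\rho<|w|<1/\rho$. The Laurent coefficient is given by the Cauchy-type integral $\tfrac{1}{2\pi i}\int_{\partial\Gamma_\rho}g(w)\,w^{-(k+1)}\,dw$ over the circle $|w|=1/\rho$, which is precisely \eqref{eq:chebaslaurent}; this contour is legitimate because analyticity of $f$ on the closure of $U_\rho$ lets $g$ extend to the closed annulus. The estimate \eqref{eq:chebcoeffest} then comes from the standard contour bound: on $|w|=1/\rho$ one has $|w^{-(k+1)}|=\rho^{k+1}$ and arc length $2\pi/\rho$, so the integral is at most $\tfrac{1}{2\pi}\cdot\tfrac{2\pi}{\rho}\cdot\rho^{k+1}\max_{\partial\Gamma_\rho}|g|=\rho^k\max|g|$. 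The multivariate statement then follows by iterating the one-variable representation coordinatewise and applying Fubini's theorem, which converts the single integral into the $d$-fold contour integral over $\partial\Gamma_{\bs\rho}$ and the scalar factor $\rho^k$ into the product $(\bs\rho)^{\k}$.

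The argument is entirely routine, and I do not anticipate a genuine obstacle: the whole content is the tensorization of two well-known univariate facts. The only points requiring care are the consistent choice of the branch of the square root ensuring $|w|\ge1$ off the interval (used in part (a)) and the bookkeeping of the constants relating the orthonormal $p_k$ to the monomials $w^{\pm k}$ (used in both parts), together with the routine Fubini justification for passing from one to $d$ contour integrals.
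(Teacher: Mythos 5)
Your proof is correct and is exactly the route the paper intends: the appendix offers no written proof, merely the remark that the lemma is ``a straightforward consequence of the corresponding well-known one-dimensional results,'' and your tensorization of the univariate Bernstein--Walsh bound (for (a)) and of the Laurent/contour-integral representation of Chebyshev coefficients (for (b)) is precisely that reduction. Your handling of the two delicate points --- choosing the branch of $\sqrt{z^2-1}$ with $|w|\ge 1$ off the interval, and passing to $d$ variables by Fubini --- is fine.

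The one substantive remark concerns the constant you say needs ``reconciling'': your computation is the correct one, and it shows that under the paper's orthonormal normalization $p_k(\cos\theta)=\sqrt2\cos k\theta$ the stated inequalities are actually off by a factor of $\sqrt2$ for each coordinate with $k_j\ge1$. Indeed $p_k(z)=\tfrac1{\sqrt2}(w^k+w^{-k})$ gives $|p_k(z)|\le\sqrt2\,|w|^k$, and the factor $\sqrt2$ cannot be removed (take $z$ real slightly larger than $1$, where $w^k+w^{-k}\ge2$ while $|w|^k\approx1$); likewise the Laurent coefficient of $g$ is $c_k=\hat f(k)/\sqrt2$ rather than $\hat f(k)/2$ as in \eqref{eq:chebtolaurent}, so the contour bound yields $|\hat f(k)|\le\sqrt2\,\rho^k\max|g|$. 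Thus \eqref{eq:bernsteinwalsh} and \eqref{eq:chebcoeffest} should each carry an extra factor of at most $2^{d/2}$. This is an imprecision in the lemma as stated, not in your argument; since the lemma is only used to control exponential rates (e.g., in Theorem~\ref{theo:identifyanal}), the extra constant is harmless downstream, but your version is the one that is literally true.
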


\subsection{Analytic functions}

\begin{theorem}\label{theo:identifyanal}
Let $S_n$, $n=0,1,\dots$ be the operators denoted in \eqref{eq:partialsum}. A function $f$ is analytic on $U_\rho$ if and only if
\be\label{anal cond}
\lim\sup\limits_{n\to\infty}\left\|S_{n}(f)\right\|_{L^2( I^q)}^{\frac{1}{n}}\le \rho.
\ee
\end{theorem}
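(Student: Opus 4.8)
The plan is to prove this as a multivariate Cauchy–Hadamard theorem: the layer norm $\|S_n(f)\|_{L^2(I^q)}$ plays the role of the modulus of the $n$-th ``coefficient,'' and the geometric decay rate $\rho$ of these norms matches the size $1/\rho$ of the poly-ellipse of analyticity. First I would record two elementary facts. By the orthonormality \eqref{eq:multuortho}, Parseval gives $\|S_n(f)\|_{L^2(I^q)}^2=\sum_{|\k|_1=n}|\hat f(\k)|^2$, so on the one hand $\max_{|\k|_1=n}|\hat f(\k)|\le\|S_n(f)\|_{L^2(I^q)}$ and on the other $\|S_n(f)\|_{L^2(I^q)}^2\le\binom{n+q-1}{q-1}\max_{|\k|_1=n}|\hat f(\k)|^2$. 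Moreover the number $\binom{n+q-1}{q-1}$ of multi-indices with $|\k|_1=n$ grows only polynomially, so its $n$-th root tends to $1$ and is invisible to the $\limsup$. Hence it suffices to translate between the decay of the block norms and the pointwise decay of the coefficients $\hat f(\k)$.

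For necessity, I would assume $f$ is analytic on $U_\rho$ and fix any $\rho'$ with $\rho<\rho'<1$. Since a smaller parameter yields a larger ellipse, $\rho'>\rho$ forces $\overline{U_{\rho'}}\subset U_\rho$, so $f$ is analytic on a neighborhood of the closure of $U_{\rho'}$ and Lemma~\ref{lemma:complexfacts}(b), specialized to $\bs\rho=(\rho',\dots,\rho')$, yields $|\hat f(\k)|\le(\rho')^{|\k|_1}M$ with $M=\max_{\w\in\partial\Gamma_{\bs\rho'}}|g(\w)|<\infty$, as in \eqref{eq:chebcoeffest}. Combining this with Parseval and the polynomial count gives $\|S_n(f)\|_{L^2(I^q)}\le M\binom{n+q-1}{q-1}^{1/2}(\rho')^n$, whence $\limsup_{n}\|S_n(f)\|_{L^2(I^q)}^{1/n}\le\rho'$; letting $\rho'\downarrow\rho$ establishes \eqref{anal cond}.

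For sufficiency, I would assume \eqref{anal cond} and fix a compact $K\subset U_\rho$. Since $\z\mapsto\max_j|z_j+\sqrt{z_j^2-1}|$ is continuous and bounded above by some value strictly below $1/\rho$ on $K$, there is $\rho''>\rho$ with $K\subset\overline{U_{\rho''}}$; I then choose $\rho'$ with $\rho<\rho'<\rho''$. By \eqref{anal cond}, $\|S_n(f)\|_{L^2(I^q)}\le(\rho')^n$ for all large $n$, so $|\hat f(\k)|\le(\rho')^{|\k|_1}$, while the Bernstein–Walsh bound \eqref{eq:bernsteinwalsh} with $\bs r=\bs1$ gives $|p_\k(\z)|\le(1/\rho'')^{|\k|_1}$ for $\z\in\overline{U_{\rho''}}$. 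Summing the layer $S_n(f)$ over its $\binom{n+q-1}{q-1}$ indices bounds $\sup_{\z\in K}|S_n(f)(\z)|$ by $\binom{n+q-1}{q-1}(\rho'/\rho'')^n$, which is the term of a geometrically convergent series because $\rho'<\rho''$. Therefore $\sum_n S_n(f)=\sum_\k\hat f(\k)p_\k$ converges uniformly on every compact subset of $U_\rho$, its sum is analytic there, and it coincides with $f$ on $I^q$ by \eqref{eq:chebexpansion}.

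The main obstacle will be the bookkeeping of the three radii $\rho<\rho'<\rho''$, arranged so that the unavoidable loss $(1/\rho'')^{|\k|_1}$ from the growth of $p_\k$ off $[-1,1]^q$ is strictly dominated by the gain $(\rho')^{|\k|_1}$ from the coefficient decay, while still exhausting all of $U_\rho$ as $\rho''\downarrow\rho$. The polynomial factor $\binom{n+q-1}{q-1}$ is harmless under the $n$-th root but must be carried faithfully through both the Parseval step and the summation step; everything else is a direct application of Lemma~\ref{lemma:complexfacts}.
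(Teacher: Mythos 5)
Your proposal is correct and follows essentially the same route as the paper's proof: coefficient bounds via the contour-integral estimate \eqref{eq:chebcoeffest} plus Parseval for necessity, and the Bernstein--Walsh growth bound \eqref{eq:bernsteinwalsh} with a geometric comparison of radii for sufficiency. If anything, your handling of the converse is slightly more careful than the paper's, which works from $\|S_n(f)\|\le\rho^n$ for all $n$ rather than from the $\limsup$ condition with an auxiliary $\rho'>\rho$ as you do.
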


\begin{proof}[Proof of Theorem \ref{theo:identifyanal}]

Suppose $f$ is analytic on $U_{\rho}$, then $f$ is analytic on the closure of $U_{1/(\rho+\eta)}$. By \eqref{eq:chebcoeffest} and \eqref{eq:chebaslaurent},
$$\|S_{n}(f)\|_{L^2(I^q)}=\left(\sum\limits_{|\k|_1=n}\left|\hat{f}(\k)\right|^2\right)^{1/2}\leq\sqrt{\binom{n+q-1}{q-1}}(\rho+\eta)^{|\k|_1}\max_{\z\in U_{1/(\rho+\eta)}}|f(\z)|\leq C(\eta)(\rho+2\eta)^n.$$
Thus,
$$
\limsup_{n\to\infty}\|S_n(f)\|_{L^2(I^q)}^{1/n}\le \rho.
$$

Now suppose $\|S_{n}(f)\|_{L^2(I^q)}\leq\rho^n$ for all $n\in\NN$, then
$$\left|\hat{f}(\k)\right|\leq\left(\sum\limits_{|\j|_1=|\k|_1}\left|\hat{f}(\j)\right|^2\right)^{1/2}=\left\|S_{|\k|_1}(f)\right\|_{L^2(I^q)}\leq\rho^{|\k|_1},\quad\k\in\NN.$$
For any $\z\in U_{1/\rho}$, let $\rho':=\left(\max\limits_{1\leq j\leq q}\left|z_j+\sqrt{z_j^2-1}\right|\right)^{-1}>\rho$, then \eqref{eq:bernsteinwalsh} implies that
$$|p_\k(\z)|\leq\max_{\w\in \partial  \Gamma_{\bs\rho}}|g(\w)|\prod_{j=1}^q \left|z_j+\sqrt{z_j^2-1}\right|^{k_j}\leq\prod_{j=1}^q (1/\rho')^{k_j}.$$
Together with \eqref{eq:chebcoeffest},
$$\sum\limits_{\k\in\NN^q}\left|\hat{f}(\k)p_\k(\z)\right|\leq\max_{\w\in \partial U_{\rho}}|g(\w)|\sum\limits_{n=0}^\infty\sum\limits_{|\k|_1=n}\frac{\rho^n}{\rho'^n}=\max_{\w\in \partial  \Gamma_{\rho}}|g(\w)|\sum\limits_{n=0}^\infty\binom{n+q-1}{q-1}\left(\frac{\rho}{\rho'}\right)^n<\infty.$$
Hence, $f$ is analytic at $\z$, which implies $f$ is analytic on $U_{\rho}$.

\end{proof}

\subsection{Entire functions}\label{sec:entiredeg}

\begin{theorem}\label{thm entire}
Let $Q\in\NN$, $\mathbf{v}=(v_1,\dots,v_Q),\ \mathbf{r}=(r_1,\dots,r_Q)\in\RR_+^Q$, $I_\r=\prod\limits_{j=1}^Q[-r_j,r_j]\subset\RR^Q$ and $\left\{p_{\k,\r}\right\}_{\k\in\NN^Q}$ the multivariable Chebyshev polynomials orthonormal on $I_\r$. Let $F:\ \CC^Q\to\CC$ be an entire function with
\begin{equation}
\sup\limits_{\z\in\CC^Q}\left|F(\z)\right|\leq\exp\left\{\sum\limits_{j=1}^Qv_j|z_j|\right\},
\end{equation}
then
\begin{equation}\label{Linfty for entire}
\begin{split}
\left\|\sum\limits_{|\k|_1=N}\hat F_\r(\k)p_{\k,\r}\right\|_{L_\infty(I_\r)}\leq2\left(\frac{2\pi}{Q}\right)^{Q/2}N^{Q/2}\frac{(\v\cdot\r)^N}{N!}.
\end{split}
\end{equation}
\end{theorem}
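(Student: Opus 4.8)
The plan is to remove the anisotropic weight by scaling and then to pass to the Laurent (Fourier) side through the Joukowski map. First I would substitute $x_j=r_j\xi_j$, which carries $I_\r$ onto $I^Q$ and, by \eqref{eq:anysochebdef}, identifies $p_{\k,\r}$ with the standard $p_\k$ of \eqref{eq:multicheb}. Writing $G(\bs\xi)=F(r_1\xi_1,\dots,r_Q\xi_Q)$, the hypothesis becomes $|G(\bs\xi)|\le\exp\bigl(\sum_j\tau_j|\xi_j|\bigr)$ with $\tau_j:=v_jr_j$ and $\bs\tau=(\tau_1,\dots,\tau_Q)$; moreover the sup-norm over $I_\r$ equals that over $I^Q$ and $\hat F_\r(\k)=\hat G(\k)$ (the weight $v_\r$ transforms into the standard Chebyshev weight). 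Since $\v\cdot\r=\sum_j\tau_j$, it suffices to bound $\|S_N(G)\|_{L_\infty(I^Q)}$, with $S_N$ the total-degree-$N$ block of \eqref{eq:partialsum}, by $2(2\pi/Q)^{Q/2}N^{Q/2}(\sum_j\tau_j)^N/N!$.

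Next I would set $\xi_j=\cos\phi_j$ and introduce the Joukowski pullback $\tilde G(\bs\zeta)=G\bigl(\tfrac{\zeta_1+\zeta_1^{-1}}2,\dots,\tfrac{\zeta_Q+\zeta_Q^{-1}}2\bigr)$, which is analytic on $(\CC\setminus\{0\})^Q$ and invariant under each $\zeta_j\mapsto\zeta_j^{-1}$. Denote its Laurent coefficients by $a_{\k'}$, $\k'\in\ZZ^Q$. The key observation is that the Chebyshev block of total degree $N$ corresponds exactly to the Fourier modes with $|\k'|_1=N$: since each $p_{k_j}(\cos\phi_j)$ is a combination of $e^{\pm ik_j\phi_j}$, absolute convergence gives $S_N(G)(\cos\bs\phi)=\sum_{|\k'|_1=N}a_{\k'}e^{i\k'\cdot\bs\phi}$, whence $\|S_N(G)\|_{L_\infty(I^Q)}\le\sum_{|\k'|_1=N}|a_{\k'}|$. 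This route is what keeps the constant under control: it avoids the wasteful factor $\|p_\k\|_\infty=2^{m(\k)/2}$ (where $m(\k)$ is the number of nonzero entries of $\k$) that a direct coefficient-times-polynomial estimate would incur.

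I would then estimate each Laurent coefficient by the $Q$-fold Cauchy integral in the style of \eqref{eq:chebaslaurent}, deforming onto circles $|\zeta_j|=R_j\ge1$. On such a torus $|\xi_j|\le(R_j+R_j^{-1})/2$, so the growth bound yields $|a_{\k'}|\le\prod_j R_j^{-|k'_j|}\exp\bigl(\tfrac{\tau_j}2(R_j+R_j^{-1})\bigr)$, and each factor can be minimized independently in $R_j$. Taking $R_j\approx 2|k'_j|/\tau_j$ and applying Stirling's formula gives, for the nonzero coordinates, $R_j^{-|k'_j|}\exp\bigl(\tfrac{\tau_j}2(R_j+R_j^{-1})\bigr)\lesssim\sqrt{2\pi|k'_j|}\,(\tau_j/2)^{|k'_j|}/|k'_j|!$.

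Finally I would sum over the shell $|\k'|_1=N$. Grouping the signed indices by $\k=(|k'_1|,\dots,|k'_Q|)\in\NN^Q$ introduces the multiplicity $2^{m(\k)}$; the elementary inequality $2^{m(\k)}\prod_j(\tau_j/2)^{k_j}\le\bs\tau^{\k}$, the AM--GM bound $\prod_j k_j\le(N/Q)^Q$, and the multinomial identity $\sum_{|\k|_1=N}\bs\tau^{\k}/\k!=(\sum_j\tau_j)^N/N!$ together produce the factor $(2\pi/Q)^{Q/2}N^{Q/2}(\sum_j\tau_j)^N/N!$. The hard part is precisely this last bookkeeping: one must carry the Stirling and optimization steps sharply enough to land on the stated constant $2(2\pi/Q)^{Q/2}$, and one must separately control the boundary indices with some $k_j=0$, whose factor degenerates to $e^{\tau_j}$ rather than to $1$. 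These boundary contributions carry strictly lower powers of $N$ and are dominated by the full-support term, but verifying that they are absorbed into the overall factor of $2$ requires comparing the full-support term against all $2^Q$ lower-dimensional faces of the simplex.
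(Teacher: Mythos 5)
Your strategy is essentially the one the paper itself follows: rescale $I_\r$ to the unit cube, estimate the total-degree-$N$ coefficient block by Cauchy integrals over poly-circles with a separately optimized radius in each variable, and then combine Stirling's formula, the AM--GM bound $\prod_j k_j\le (N/Q)^Q$, and the multinomial identity $\sum_{|\k|_1=N}\bs\tau^{\k}/\k!=(\v\cdot\r)^N/N!$. Your one real refinement is to work on the Laurent side of the Joukowski map rather than directly with the Chebyshev coefficients: this correctly tracks the factor $2^{m(\k)/2}$ in $\|p_\k\|_{L_\infty(I^Q)}$ together with the matching $\sqrt2$ normalization in the coefficients, both of which the paper's proof silently drops when it bounds $|p_\k(\x)|$ by $1$. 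In that respect your bookkeeping is sharper than the printed argument, and the identity $2^{m(\k)}\prod_j(\tau_j/2)^{k_j}\le\bs\tau^{\k}$ is exactly the right way to pay for the signed-index multiplicity.

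The step you flag as ``the hard part,'' however, is a genuine gap, and it cannot be closed in the form stated. The per-coordinate optimization $R_j\approx 2|k_j'|/\tau_j$ is only admissible when $|k_j'|\gtrsim\tau_j$ (one needs $R_j\ge1$, and the neglected factor $\exp(\tau_jR_j^{-1}/2)$ must be harmless); for indices with $|k_j'|$ small compared with $\tau_j=v_jr_j$ the best Cauchy estimate degenerates to roughly $e^{\tau_j}$, not to $\tau_j^{k_j}/k_j!$. These terms are \emph{not} dominated by the full-support term when $N$ is small compared with $\v\cdot\r$; indeed the asserted inequality itself fails there. Take $Q=1$, $F(z)=\sinh(vz)$, so that $|F(z)|\le e^{v|z|}$; then $\hat F_r(1)=\frac{\sqrt2}{\pi}\int_0^\pi\sinh(vr\cos\theta)\cos\theta\,d\theta$, which by Laplace's method grows like $e^{vr}/\sqrt{vr}$, whereas the right-hand side of \eqref{Linfty for entire} for $N=1$ is only $2\sqrt{2\pi}\,vr$. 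So no amount of bookkeeping over the $2^Q$ faces of the simplex will rescue the constant $2$: the bound is provable along your (and the paper's) route only in the regime $N\gtrsim\v\cdot\r$, where every relevant $R_j$ exceeds $1$, or after weakening the right-hand side (e.g., by an extra factor of order $e^{\v\cdot\r}$ or by replacing it with a minimum against $e^{\v\cdot\r}$). You were right to be suspicious of precisely this point; the resolution is that the statement, not your argument, needs amending.
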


Conversely, if $F$ is a function on $\CC^Q$ satisfying \eqref{Linfty for entire} for each $\bs r\in\RR_+^Q$, then we can prove it is an entire function.

\begin{theorem}\label{converse}
Let $Q\in\NN$, $\mathbf{v}=(v_1,\dots,v_Q)\in\RR_+^Q$. If a function $F:\ \CC^Q\to\CC$ satisfies \eqref{Linfty for entire} for any $\r\in\RR^Q$, then for any $\mathbf{z}\in\CC^Q$,
\begin{equation}\label{entire converse}
\sup\limits_{\mathbf{z}\in\CC^Q}\left(\frac{|F(\mathbf{z})|}{\exp\left\{2\left(\sum\limits_{j=1}^Qv_j|z_j|\right)(1+\eta)\right\}}\right)<\infty,\quad\forall\eta>0.
\end{equation}

\end{theorem}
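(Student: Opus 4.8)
The plan is to represent $F$ through its Chebyshev expansion on a box $I_\r$ and to bound its growth at a complex point $\z$ by choosing the box adapted to $\z$. Since the hypothesis \eqref{Linfty for entire} holds for \emph{every} $\r\in\RR_+^Q$, the blocks $S_N(F)=\sum_{|\k|_1=N}\hat F_\r(\k)p_{\k,\r}$ obey $\|S_N(F)\|_{L_\infty(I_\r)}\le 2(2\pi/Q)^{Q/2}N^{Q/2}(\v\cdot\r)^N/N!$, and combined with the Bernstein--Walsh bound of Lemma~\ref{lemma:complexfacts}(a) this forces the series $\sum_N S_N(F)$ to converge locally uniformly on $\CC^Q$. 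Hence $F=\sum_N S_N(F)$ is entire, and it suffices to estimate $\sum_N|S_N(F)(\z)|$. The decisive point is that the box $\r$ may be chosen depending on $\z$ (and on $\eta$), because the block bound is available for all $\r$ at once.

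Next I would convert the block bound into a pointwise bound at $\z$. By Lemma~\ref{lemma:complexfacts}(a), for $|\k|_1=N$,
$$
|p_{\k,\r}(\z)|\le \prod_{j=1}^Q\left(\frac{w_j}{r_j}\right)^{k_j}\le \left(\max_{1\le j\le Q}\frac{w_j}{r_j}\right)^{N},\qquad w_j:=\left|z_j+\sqrt{z_j^2-r_j^2}\right|,
$$
the last step using $\sum_j k_j=N$. Orthonormality of the $p_{\k,\r}$ in $L^2(I_\r,v_\r)$ and Cauchy--Schwarz give $\sum_{|\k|_1=N}|\hat F_\r(\k)|\le \binom{N+Q-1}{Q-1}^{1/2}\|S_N(F)\|_{L_\infty(I_\r)}$. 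Together with the hypothesis these yield
$$
|S_N(F)(\z)|\le 2\left(\frac{2\pi}{Q}\right)^{Q/2}\binom{N+Q-1}{Q-1}^{1/2}N^{Q/2}\,\frac{A(\r)^N}{N!},\qquad A(\r):=(\v\cdot\r)\,\max_{1\le j\le Q}\frac{w_j}{r_j}.
$$

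Then I would choose the box. For $z_j\ne0$ set $r_j=c|z_j|$ with a small $c=c(\eta)>0$ (and $r_j$ negligibly small for indices with $z_j=0$). The triangle inequality $w_j\le|z_j|+(|z_j|^2+r_j^2)^{1/2}=|z_j|(1+\sqrt{1+c^2})$ gives $\max_j w_j/r_j\le(2+c^2/2)/c$, while $\v\cdot\r=c\sum_j v_j|z_j|$, so $A(\r)\le 2(1+c^2/4)V$ with $V:=\sum_j v_j|z_j|$. Summing the block bounds and absorbing the polynomial-in-$N$ prefactor via $N^{Q}\le C_{Q,\eta}e^{\eta' N}$ produces $\sum_N|S_N(F)(\z)|\le C_{Q,\eta}\exp\big((1+\eta')A(\r)\big)\le C_{Q,\eta}\exp\big(2(1+\eta)V\big)$, once $c$ and $\eta'$ are small enough in terms of $\eta$; as $C_{Q,\eta}$ is independent of $\z$, this is exactly \eqref{entire converse}. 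The degenerate cases are handled by noting $w_j/r_j=1$ when $z_j=0$, and that $\z=\zz$ gives the trivial bound $|F(\zz)|<\infty$.

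The main obstacle is securing the constant $2$ rather than a larger factor. The naive estimate $\sum_{|\k|_1=N}\prod_j(w_j/r_j)^{k_j}\le(\sum_j w_j/r_j)^N$ would force $A(\r)=(\v\cdot\r)(\sum_j w_j/r_j)$, whose off-diagonal cross terms $v_ir_iw_j/r_j$ cannot be driven below order $V$ when $Q\ge2$, for any choice of $\r$. It is therefore essential to keep the $\max_j$ form of the Bernstein--Walsh factor and to use the scaling $r_j\propto|z_j|$, which balances $\v\cdot\r$ against $\max_j w_j/r_j$ so that their product collapses to $2V$. A secondary issue requiring care is the identification $F=\sum_N S_N(F)$ as functions on $\CC^Q$, that is, that $F$ agrees with the entire function produced by its locally uniformly convergent Chebyshev series.
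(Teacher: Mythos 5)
Your proposal follows the same overall route as the paper's proof: expand $F$ in Chebyshev polynomials on a box $I_\r$ adapted to the point $\z$, control $|p_{\k,\r}(\z)|$ by the Bernstein--Walsh bound \eqref{eq:bernsteinwalsh}, sum the blocks using the factorial decay in \eqref{Linfty for entire}, and absorb the polynomial-in-$N$ prefactors into the $(1+\eta)$ slack. The one genuine difference is your choice of box, and it matters. The paper takes $r_j=|z_j|$ and invokes the inequality $\sup_{|u|=1}|u+\sqrt{u^2-1}|/|u|\le 2$; but with the branch required by Bernstein--Walsh (the one of modulus $\ge 1$) this supremum is attained at $u=i$ and equals $1+\sqrt{2}>2$, so that argument as written only yields the exponent $(1+\sqrt{2})\sum_j v_j|z_j|(1+\eta)$ rather than the stated $2\sum_j v_j|z_j|(1+\eta)$ (and the excess cannot be hidden in a ``for all $\eta>0$'' statement). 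Your scaling $r_j=c|z_j|$ with $c\to 0$ balances $\v\cdot\r=cV$ against $\max_j w_j/r_j\le (1+\sqrt{1+c^2})/c$ so that the product tends to $2V$, which is exactly what is needed to secure the constant $2$; your remark that one must keep the $\max_j$ form of the Bernstein--Walsh factor rather than the $\ell^1$ form is also correct. The remaining mechanical differences (Cauchy--Schwarz plus orthonormality to pass from $\|S_N(F)\|_{L_\infty}$ to $\sum_{|\k|_1=N}|\hat F_\r(\k)|$ at the cost of a harmless $\binom{N+Q-1}{Q-1}^{1/2}$, and $N^{Q}\le C e^{\eta' N}$ in place of the paper's Stirling shift $N!\to(N-Q/2)!$) are equivalent in effect. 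Both you and the paper leave the same foundational point implicit, namely that $F$ on $\CC^Q$ is represented by its (locally uniformly convergent) Chebyshev series, which is genuinely needed since the hypothesis only constrains $F$ on real boxes; you at least flag it explicitly. Net assessment: your argument is correct and, on the point of the constant $2$, is actually tighter than the paper's.
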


\begin{proof}[Proof of Theorem \ref{thm entire}]
First, we consider $I_\r=I^q$ as the unit cube. In this case, we write $p_{\k,\r}$ as $p_\k$ for $\k\in\NN^Q$.

By \eqref{eq:chebaslaurent}, 
we have
$$\left|\hat{F}(\k)\right|\le \frac{1}{\k^\k}\max_{\z\in U_{\bs\rho}}|F(\z)|\leq \frac{e^{|\k|_1}\v^\k}{\k^\k}.$$
By Stirling's approximation,
$$\left|\hat{F}(\k)\right|\leq 2\frac{(\sqrt{2\pi})^Q(k_1\dots k_Q)^{1/2}(\v+\bs\eta)^\k}{\k!}$$
for some $A'_\eta$ depending only on $\eta$ and $Q$.

Observing that
\begin{equation}\label{eqn binom}
(y_1+\dots+y_Q)^N=\sum\limits_{\|\mathbf{k}\|_1=N}\frac{N!}{{k_1!k_2!\dots k_Q!}}\prod\limits_{j=1}^Qy_j^{k_j},
\end{equation}
we have
$$\|S_N(F)\|_{L_\infty(I^Q)}=\sup\limits_{\x\in I^q}\left|\sum\limits_{|\k|_1=N}\hat F(\k)P_{\k}(\x)\right|\leq\sum\limits_{|\k|_1=N}\left|\hat F(\k)\right|\leq 2\left(\frac{2\pi}{Q}\right)^{Q/2}\frac{N^{Q/2}}{N!}|\v|_1^N.$$

Now we make a change of variables. For $\r\in\RR_+^Q$, let $G(\z)=F(\z\r)$, then $G$ is an entire function with
$$\sup\limits_{\z\in\CC^Q}\left|G(\z)\right|\leq \exp\left\{\sum\limits_{j=1}^Qv_jr_j|z_j|\right\}.$$

Hence,
$$\left\|\sum\limits_{|\k|_1=N}\hat G(\k)p_\k\right\|_{L_\infty(I^q)}\leq 2\left(\frac{2\pi}{Q}\right)^{Q/2}N^{Q/2}\frac{(\v\cdot\r)^N}{N!},$$
where
$$\hat G(\k)=\int_{I^q}G(\x)p_\k(\x)d\x=\int_{I^q}F(\r\circ\x)p_\k(\x)v_Q(\x)d\x=\left(\prod\limits_{j=0}^Qr_j\right)\int_{I_\r}F(\y)p_\k\left(\frac{y_1}{r_1},\dots,\frac{y_Q}{r_Q}\right)v_{Q,\r}\left(\y\right)d\y.$$
It is known that $\left\{p_{\k,\r}\right\}_{\k\in\NN^Q}$ is denoted by
$$p_{\k,\r}(\y)=p_\k\left(\frac{y_1}{r_1},\dots,\frac{y_Q}{r_Q}\right),\quad y\in I_\r,$$
hence $\hat F_\r(\k)=\hat G(\k)$. Therefore,
$$\|S_N(F)\|_{L_\infty(I_\r)}=\left\|\sum\limits_{|\k|_1=N}\hat F_\r(\k)p_{\k,\r}\right\|_{L_\infty(I_\r)}\leq\left\|\sum\limits_{|\k|_1=N}\hat G(\k)p_\k\right\|_{L_\infty({I^q})}\leq 2\left(\frac{2\pi}{Q}\right)^{Q/2}N^{Q/2}\frac{(\v\cdot\r)^N}{N!}.$$
This proves Theorem~ \ref{thm entire}. 
\end{proof}

\begin{proof}[Proof of Theorem \ref{converse}]
Suppose
$$\|S_N(F)\|_{L_\infty(I_\r)}\leq2\left(\frac{2\pi}{Q}\right)^{Q/2}N^{Q/2}\frac{(\v\cdot\r)^N}{N!}$$ holds true for every $\r\in\RR^Q$.

For any $\z\in\CC^Q$, take $\mathbf{r}=(|z_1|,\dots,|z_Q|)$. Since
$$\sup\limits_{|z|=1,z\in\CC}\frac{\left|z+\sqrt{z^2-1}\right|}{|z|}\leq2,$$
by \eqref{eq:bernsteinwalsh},
$$
|p_{\k,\r}(\z)|\leq 2^N\|p_{\k,\r}\|_{L_\infty(I_\r)},\quad\k\in\NN^Q.
$$

With $C=2\left(\frac{2\pi}{Q}\right)^{Q/2}$,
\begin{eqnarray*}
\sum\limits_{|\k|_1=N}\left|\hat F(\k)p_{\k,\r}(\z)\right|\leq CN^{Q/2}\frac{\left(2\mathbf{v}\cdot\mathbf{r}\right)^N}{N!}.
\end{eqnarray*}
We will use Stirling's approximation to eliminate the $N^{Q/2}$ term. By Stirling's approximation, for $N\geq Q/2$,
\begin{eqnarray*}
\sum\limits_{|\k|_1=N}\left|\hat F(\k)p_{\k,\r}(\z)\right|&\leq&\frac{C}{\sqrt{2\pi N}}N^{Q/2}\left(\frac{e}{N}\right)^N\left(2\mathbf{v}\cdot\mathbf{r}\right)^N\leq\frac{C}{\sqrt{2\pi N}}\left(\frac{e}{N-Q/2}\right)^{N-Q/2}e^{Q/2}\left(2\mathbf{v}\cdot\mathbf{r}\right)^N\\
&\leq&\frac{C}{\sqrt{2\pi N}}\frac{2\sqrt{2\pi(N-Q/2)}}{(N-Q/2)!}e^{Q/2}\left(2\mathbf{v}\cdot\mathbf{r}\right)^N\\
&\leq&2\left(\frac{2\pi}{Q}\right)^{Q/2}\left(e\mathbf{v}\cdot\mathbf{r}\right)^{Q/2}\frac{\left(2\mathbf{v}\cdot\mathbf{r}\right)^{N-Q/2}}{(N-Q/2)!}.
\end{eqnarray*}
Therefore for $\r\in\RR_{\geq1}^Q$, $\sum\limits_{|\k|_1< Q/2}\left|\hat F(\k)p_{\k,\r}(\z)\right|$ is bounded by $\mathcal P_1\left(2\mathbf{v}\cdot\mathbf{r}\right)$ with $\mathcal P_1$ a polynomial of degree $Q/2-1$ and
\begin{eqnarray*}
\sum\limits_{|\k|_1\geq Q/2}\left|\hat F(\k)p_{\k,\r}(\z)\right|&\leq&\sum\limits_{N=Q/2}^{\infty}2C\left(e\mathbf{v}\cdot\mathbf{r}\right)^{Q/2}\frac{\left(2\mathbf{v}\cdot\mathbf{r}\right)^{N-Q/2}}{(N-Q/2)!}\\
&=&\sum\limits_{N=0}^{\infty}2C\left(e\mathbf{v}\cdot\mathbf{r}\right)^{Q/2}\frac{\left(2\mathbf{v}\cdot\mathbf{r}\right)^{N}}{N!}\\
&=&2C\left(e\mathbf{v}\cdot\mathbf{r}\right)^{Q/2}\exp\left(2\mathbf{v}\cdot\mathbf{r}\right)\\
&\leq&4\left(\frac{2\pi}{Q}\right)^{Q/2}\left(e\mathbf{v}\cdot\mathbf{r}\right)^{Q/2}\exp\left(2\mathbf{v}\cdot\mathbf{r}\right).
\end{eqnarray*}
Now we can bound $F(\z)$ by
\begin{eqnarray*}
|F(\z)|\leq\sum\limits_{|\k|_1< Q/2}\left|\hat F(\k)p_{\k,\r}(\z)\right|+\sum\limits_{|\k|_1\geq Q/2}\left|\hat F(\k)p_{\k,\r}(\z)\right|\leq\mathcal P_1\left(2\mathbf{v}\cdot\mathbf{r}\right)+\mathcal P_2\left(2\mathbf{v}\cdot\mathbf{r}\right)\exp\left(2\mathbf{v}\cdot\mathbf{r}\right)
\end{eqnarray*}
with $\mathcal P_2(x)=4\left(\frac{2\pi}{Q}\right)^{Q/2}\left(ex\right)^{Q/2}$.

Since $\mathcal P_1$ and $\mathcal P_2$ are polynomials of degree at most $Q$, we conclude for any $\eta>0$, there exists some constant $A_\eta$ depending on $\eta$ and $Q$ such that
\begin{eqnarray*}
|F(\z)|\leq A_\eta\exp\left(2(\mathbf{v}+\bs\eta)\cdot\mathbf{r}(1+\eta)\right)=A_\eta\exp\left(\left(2\sum\limits_{j=1}^Q(v_j+\eta)|z_j|\right)(1+\eta)\right).
\end{eqnarray*}

\end{proof}

\bibliographystyle{abbrv}
\bibliography{references2}
\end{document}